\pdfoutput=1
\documentclass{article}
\usepackage{fullpage}
\RequirePackage{algorithm}
\RequirePackage{algorithmic}
\usepackage{authblk}

\usepackage[round]{natbib}

\usepackage[utf8]{inputenc} 
\usepackage[T1]{fontenc}    

\PassOptionsToPackage{hyphens}{url}  
\usepackage{hyperref}       
\usepackage{colortbl}
\usepackage{xcolor}
\definecolor{white}{RGB}{255, 255, 255}
\definecolor{darkblue}{RGB}{0, 0, 120}
\hypersetup{
    citebordercolor=white,
    colorlinks=true,
    linkcolor=darkblue,
    urlcolor=darkblue,
    filecolor=darkblue,
    citecolor=darkblue,
    anchorcolor=darkblue,
    breaklinks=true,
}
\usepackage{url}            
\usepackage{booktabs}       
\usepackage{amsfonts}       
\usepackage{nicefrac}       
\usepackage{microtype}      
\usepackage{xfrac}
\usepackage{multirow}
\usepackage{graphicx}
\usepackage{mathtools}
\usepackage{subcaption}
\usepackage{wrapfig}

\renewcommand{\cite}{\citep}

\bibliographystyle{abbrvnat}

\usepackage{algorithm}
\usepackage{algorithmic}
\usepackage{float}

\definecolor{Highlight}{RGB}{255, 220, 220} 
\newcolumntype{h}{>{\columncolor{Highlight}}c}

\renewcommand{\epsilon}{\varepsilon}

\usepackage{amsmath, amssymb, amsthm}
\usepackage{bm}
\providecommand{\tightlist}{%
  \setlength{\itemsep}{0pt}\setlength{\parskip}{0pt}}
\DeclarePairedDelimiter\ceil{\lceil}{\rceil}

\newcommand{\wb}{\bm{w}}

\newcommand{\xb}{\bm{x}}
\newcommand{\zb}{\bm{z}}

\newcommand{\norm}[1]{\left\lVert #1 \right\rVert}

\newcommand{\Align}[1]{\begin{aligned} #1 \end{aligned}}
\newcommand{\parens}[1]{\left( #1 \right)}

\newcommand{\R}{\mathbb{R}}
\newcommand{\given}[1][]{\:\Big\vert\:}
\newcommand{\E}[1]{\mathbb{E}\left[ #1 \right]}

\newcommand{\est}[1]{\widehat{#1}}
\newcommand{\grad}{\nabla}

\newcommand{\EE}{\mathbb{E}}

\newcommand{\inner}[1]{\left<#1\right>}
\newcommand{\bigO}[1]{\mathcal{O}\parens{#1}}

\newcommand{\assign}{\leftarrow}

\newtheorem{thm}{Theorem}
\newtheorem{cor}[thm]{Corollary}
\newtheorem{lemma}[thm]{Lemma}
\newtheorem*{lemmanonum*}{Lemma}
\theoremstyle{definition}
\newtheorem{defn}{Definition}

\makeatletter
\def\blfootnote{\gdef\@thefnmark{}\@footnotetext}
\makeatother

\newcommand{\RadaDamp}{\textsc{RadaDamp}}

\usepackage[hang,flushmargin]{footmisc}
\usepackage{doi}

\newcommand{\Title}{Improving the convergence of SGD through adaptive batch sizes}

\makeatletter
\newcommand*\rel@kern[1]{\kern#1\dimexpr\macc@kerna}
\newcommand*\widebar[1]{%
  \begingroup
  \def\mathaccent##1##2{%
    \rel@kern{0.8}%
    \overline{\rel@kern{-0.8}\macc@nucleus\rel@kern{0.2}}%
    \rel@kern{-0.2}%
  }%
  \macc@depth\@ne
  \let\math@bgroup\@empty \let\math@egroup\macc@set@skewchar
  \mathsurround\z@ \frozen@everymath{\mathgroup\macc@group\relax}%
  \macc@set@skewchar\relax
  \let\mathaccentV\macc@nested@a
  \macc@nested@a\relax111{#1}%
  \endgroup
}
\makeatother
\usepackage{titlesec}

\setcitestyle{authoryear,round,citesep={;},aysep={,},yysep={;}}
\title{\Title}
\author[1]{Scott Sievert\footnote{Corresponding author. Email: \texttt{scott.sievert.3@us.af.mil}}}
\author[2]{Shrey Shah}
\date{ }
\affil[1]{Air Force Research Laboratory\footnote{Relevant work performed while at the University of Wisconsin--Madison}}
\affil[2]{University of Wisconsin--Madison}

\definecolor{mydarkblue}{rgb}{0,0.08,0.45}
\hypersetup{%
  colorlinks=true,
  linkcolor=mydarkblue,
  citecolor=mydarkblue,
  filecolor=mydarkblue,
  urlcolor=mydarkblue,
}

\begin{document}

\maketitle


%

\begin{abstract}

    Mini-batch stochastic gradient descent (SGD) and variants thereof
    approximate the objective function's gradient with a small number of
    training examples, aka the batch size.  Small batch sizes require little
    computation for each model update but can yield high-variance gradient
    estimates, which poses some challenges for optimization. Conversely, large
    batches require more computation but can yield higher precision gradient
    estimates. This work presents a method to adapt the batch size to the
    model's training loss. For various function classes, we show that our
    method requires the same order of model updates as gradient descent while
    requiring the same order of gradient computations as SGD. This method
    requires evaluating the model's loss on the entire dataset every model
    update. However, the required computation is greatly reduced by
    approximating the training loss. We provide experiments that illustrate our
    methods require fewer model updates without increasing the total amount of
    computation.

\end{abstract}

\section{Introduction}\label{introduction}\label{sec:intro}
Mini-batch SGD and variants thereof~\cite{bottou2018opt} are extremely
popular in machine learning (e.g.,~\citet{zou2017youtube, szegedy2017inception,
simon2016imagenet}).  These methods attempt to minimize a function
$F(\wb) := \sfrac{1}{n}\sum_{i=1}^n f(\wb; \zb_i)$ where the function $f$
measures the loss of a model $\wb$ on example $\zb_i$. For example, if
performing linear regression on $d$ features, $\zb_i = (\xb_i, y_i)$ which includes
a feature vector $\xb_i\in\R^d$ and scalar output variable $y_i\in\R$.  To
minimize $F$, mini-batch SGD uses \(B\) examples to compute a model update via
\begin{equation}
    \wb_{k+1} = \wb_k - \frac{\gamma_k}{B}
\sum_{i=1}^B \grad f(\wb_k; \zb_{i_s})
\label{eq:mini-batch}
\end{equation}
where $\gamma_k$ is the step-size or learning rate at model update $k$ and
$i_s$ is chosen uniformly at random. This update approximates \(F\)'s gradient
with \(B\) examples in order to make the complexity of each model update scale
with $B$, typically much smaller than $n$~\cite{bottou2012stochastic}.

In practice, the batch size \(B\) is a hyper-parameter and is often constant
throughout the optimization (e.g,~\citet{alistarh2017, zagoruyko2016b,
goyal2017accurate}).  There is a clear tradeoff between small and large batch
sizes for each model update: using small batch sizes reduces the computation
required for each model update while yielding imprecise estimates of the
objective function's gradient.  Conversely, large batch sizes yield more
precise gradient estimates, but fewer model updates can be performed with the
same computation budget.

\subsection{Contributions}\label{contributions}
\label{sec:contribution}

Why should the batch size remain static as an optimization proceeds? With poor
initialization, the optimal model for each example is in the same direction.
In this case, approximating the objective function's gradient with more
examples will have little benefit because each gradient is similar. By that
measure, perhaps large batch sizes will provide utility near the optimum
because the optimum depends on all training examples.

This work expands upon the idea by adaptively growing the batch size with model
performance\footnote{``Model performance'' defined as the objective function
loss over the entire training set for convex and strongly-convex functions.} as
the optimization proceeds. Specifically, this work does the following:
\begin{itemize}
\tightlist

\item Provides methods to adapt the batch size to the model
    performance. These methods require significant computation because they
        require computing model performance before every model update.

\item Shows that adapting the batch size to the model performance can require
    significantly fewer model updates and approximately the
        same number of gradient computations when compared with standard
        SGD.\footnote{At least for convex and strongly-convex
        functions.}

\item Provides a practical implementation that circumvents the
    requirement to evaluate the objective function before every model update.

\item Provides experimental results on both methods. These experiments
    show that the methods above require fewer model updates and the same number of
        gradient computations as standard mini-batch SGD to reach a particular
        accuracy.

\end{itemize}

The benefit of reducing the number of model updates isn't apparent at first
glance. One benefit is that the wall-clock time required for any one model
update is agnostic to the batch size with a certain distributed system
configuration\footnote{Specifically when the number of workers is proportional
to the batch size.}~\cite[Sec.~5.5]{goyal2017accurate}. When the batch size
grows geometrically, the number of model updates is a ``meaningful measure of
the training time'' in a similar system~\cite[Sec.~5.4]{smith2017}.
Additionally, larger batch sizes improve distributed system
performance~\cite{qi2016paleo, yin2017gradient}.\footnote{See
\url{https://talwalkarlab.github.io/paleo/} with ``strong scaling.''}

Our adaptive method receives the function value in addition to the gradients,
which is more information than SGD and variants thereof
receive~\cite{nemirovsky1983problem}. However, in practice, our proposed
practical implementation largely ignores the gradient norm and essentially only
receives the function value.

In~Section \ref{sec:setup}, some preliminary notions are introduced before the
presentation of the adaptive batch size method and the corresponding
convergence results in Section~\ref{sec:main}. We address some practical
implementation issues and provide validating experiments in
Section~\ref{sec:practical}.

\section{Related work}\label{related-work}\label{sec:related-work}

Mini-batch SGD with small batch sizes tends to bounce around the optimum
because the gradient estimate has high variance -- the optimum depends on
\textit{all} examples, not a few examples. Common methods to circumvent this
issue include some step size decay schedule~\cite[Sec.~4]{bottou1998} and
averaging model iterates with averaged SGD
(ASGD)~\cite{polyak1992acceleration}. Less common methods include stochastic
average gradient (SAG) and stochastic variance reduction (SVRG) because they
present memory and computational restrictions
respectively~\cite{schmidt2013minimizing, johnson2013accelerating}. Our work is more similar in spirit to variance reduction techniques that use variable learning rates and batch sizes, discussed below.

\paragraph{Adaptive learning rates} Adaptive learning rates or step sizes can help adapt the
optimization to the most informative features with Adagrad~\cite{ward2018,
duchi2011a} or to estimate the first and second moments of the gradients with
Adam~\cite{kingma2014}. Adagrad has inspired Adadelta~\cite{zeiler2012} which
makes some modifications to average over a certain window and approximate the
Hessian. Such methods are useful for convergence and a reduction in hyperparameter tuning.\footnote{The
original work on SGD stated that the learning rate should decay to meet some conditions, but did not specify the decay schedule~\cite{robbins1951}.}
AdaGrad and variants thereof give principled, robust ways to vary the learning rate that avoid having to tune learning rate decay schedules~\cite{ward2018}.


\paragraph{Increasing batch sizes}\label{sec:work-inc-bs}

Increasing the batch size as an optimization proceeds is another method of
variance reduction.  Strongly convex functions provably benefit from
geometrically increasing batch sizes in terms of the number of model updates
while requiring no more gradient computations than
SGD~\cite[Ch.~5]{bottou2018opt}. The number of model updates required for
strongly convex, convex and non-convex functions is improved with batch sizes
that increase like $\bigO{r^k}$, $\bigO{k^2}$ and $\bigO{k}$
respectively~\cite{zhou2018new}.\footnote{
    In HSGD, convex functions require $\bigO{\varepsilon^{-3}}$ gradient
    computations~\cite[Cor.~2]{zhou2018new}. As illustrated in
    Table~\ref{fig:table}, this work and SGD require $\bigO{\varepsilon^{-2}}$
    gradient computations.
}

Smith et al. perform variance reduction by geometrically increasing the batch
size or decreasing the learning rate by the same factor, both in discrete steps
(e.g., every 60 epochs)~\cite{smith2017}.  Specifically, Smith et al. motivate
their method by connecting variance reduction to simulated annealing, in which
reducing the SGD model update variance or ``noise scale'' in a series of
discrete steps enhances the likelihood of reaching a ``robust''
minima~\cite[Sec.~3]{smith2017}.

Smith et al. show that increasing the batch size yields similar results to
decaying the learning rate by the same amount, which suggests that ``it is the
noise scale which is relevant, not the learning
rate''~\cite[Sec.~5.1]{smith2017}. By that measure, adaptive batch sizes are
to geometrically increasing batch sizes as adaptive learning rate methods are
to SGD learning rate decay schedules.

\paragraph{Adaptive batch sizes} Several schemes to adapt the batch size to the
model have been developed, ranging from model specific
schemes~\cite{orr1997removing} to more general schemes~\cite{de2016big,
balles2016coupling, byrd2012}.  These methods tend to look at the sample
variance of every individual gradient, which involves the computation of a
single gradient norm $\norm{\grad f_i(\xb)}$ for every example $i$ in the
current batch~\cite{byrd2012, balles2016coupling, de2016big}. Naively, this
requires feeding every example through the model \textit{individually}. This
can be circumvented; Balles et al. present an approximation method to avoid the
variance estimation that requires about $1.25\times$ more computation than the
standard mini-batch SGD update, with some techniques to avoid memory
constraints~\cite[Sec.~4.2]{balles2016coupling}.

Friedlander et al. use adaptive batch sizes to prove linear convergence for
strongly convex functions and a $\bigO{1/k}$ convergence rate for convex
functions~\cite{friedlander2012}. Their adaptive approach relies on providing a
batch size that satisfies certain error bounds on the gradient residual (in
Eq.~2.6), which provides motivation for geometrically increasing batch
sizes~\cite[Sec.~3]{friedlander2012}.

Work developed concurrently with this work includes an SVRG
modification~\cite{ji2019history}, which involves modifying the outer-loop of
SVRG. Instead of calculating the gradient for all $n$ examples during every
loop, they propose a scheme to calculate the gradient for $N_s$ examples where
$N_s$ is inversely proportional to the average gradient variance.\footnote{In
later revisions of their work, they provide a comparison with this work, which
includes a similar proof to Theorem~\ref{thm:smooth}~\cite[Appendix
D]{ji2019history}.}





\section{Preliminaries}\label{setup-and-preliminaries}

\label{sec:setup}

First, some basic definitions:

\begin{defn}A function $F$ is $L$-Lipschitz if
    $\norm{F(\wb_1) - F(\wb_2)} \le L \norm{\wb_1 - \wb_2}~
    \forall~\wb_1,\wb_2$.
\end{defn}

\begin{defn}
A function $F$ is $\beta$-smooth if the gradients are $\beta$-Lipschitz, or if
    $\norm{\grad F(\wb_1) - \grad F(\wb_2)} \le \beta \norm{\wb_1 - \wb_2}~\forall \wb_1, \wb_2$.
\label{def:beta}
\end{defn}

The class of \(\beta\)-smooth functions is a result of the gradient norm
being bounded, or that all the eigenvalues of the Hessian are smaller
than \(\beta\). If a function \(F\) is \(\beta\)-smooth, the function
also obeys
\(\forall x_1,x_2,~F(\wb_1) \le F(\wb_2) + \inner{\grad F(\wb_2), \wb_1 - \wb_2} + \frac{\beta}{2}\norm{\wb_1 - \wb_2}_2^2\)
\cite[Lemma~3.4]{bubeck2015convex}.

\begin{defn}
    A function $F$ is $\alpha$-strongly convex if $\forall \wb_1, \wb_2,~F(\wb_1) \ge F(\wb_2) + \inner{\grad F(\wb_2), \wb_1 - \wb_2} + \frac{\alpha}{2}\norm{\wb_1 - \wb_2}_2^2$.
\label{def:cnvx}
\end{defn}

\(\alpha\)-strongly convex functions grow quadratically away from the optimum
\(\wb^\star = \arg\min_{\wb} F(\wb)\) since \(F(\wb) - F(\wb^\star) \ge
\frac{\alpha}{2}\norm{\wb - \wb^\star}^2_2\). While amenable to analysis, this
criterion is often too restrictive.  The Polyak- {\L}ojasiewicz condition is a
generalization of strong convexity that's less restrictive~\cite{polyak1963a,
karimi2016linear}:
\begin{defn}
A function $F$ obeys Polyak-{\L}ojasiewicz (PL) condition with parameter $\alpha > 0$ if
$\frac{1}{2} \norm{\grad F(\wb)}_2^2 \ge \alpha (F(\wb) - F^\star)$ when $F^\star = \min_{\wb} F(\wb)$.
\end{defn}

For simplicity, we refer to these functions $F$ satisfying this condition as
being ``$\alpha$-PL.''  The class of $\alpha$-PL functions includes
$\alpha$-strongly convex functions and a certain class non-convex
functions~\cite{karimi2016linear}.  One important constraint of $\alpha$-PL
functions is that every stationary point must be a global minimizer, though
stationary points are not necessarily unique. Recent work has shown similar
convergence rates for $\alpha$-PL and $\alpha$-strongly convex functions for
a variety of different algorithms~\cite{karimi2016linear}.

A bound on the expected gradient norm will also be useful because it will
appear in theorem statements. For ease of notation, let's define $f_i(\wb) :=
f(\wb; \zb_i)$.
\begin{defn}
    For model $\wb$, let $M^2(\wb) := \sfrac{1}{n}\sum_{i=1}^n \norm{\grad
    f_i(\wb)}_2^2$ and let $\mathcal{M} := \{M^2(\wb_k) : k \in\mathbb{N}\cup\{0\}\text{ and }k < T\}$
     when $T$ model updates are performed. Let  $M^2_L := \min\mathcal{M}$ and $M^2_U := \max\mathcal{M}$.
\end{defn}

\section{Convergence}\label{main-results}
\label{sec:main}

In this section we will prove convergence rates for mini-batch SGD with
adaptive batch sizes and give bounds on the number of gradient computations
needed. Our main results are summarized in Table~\ref{fig:table}.  In general,
this work shows that mini-batch SGD with appropriately chosen adaptive batch
sizes requires the same number of model updates as gradient descent (up to
constants) while not requiring more gradient computations than serial SGD (up
to constants).\footnote{In this theoretical discussion, the batch size will not
require any computation.}

In general, the adaptive batch sizes are inversely proportional to the current
model's loss.\footnote{This computation is impractical but possible. The model
updates in Eq.~\ref{eq:mini-batch} produce model $\wb_{k+1}$ from model
$\wb_k$, and the batch size $B_k$ depends on model $\wb_k$.} This method is
motivated by an approximate measure of gradient dissimilarity as detailed in
Appendix~\ref{sec:grad-div}.
Section~\ref{sec:model-updates} analyzes the required number of model updates,
and Section~\ref{sec:num-examples} analyzes the required number of gradient
computations. The theory in this section might require significant computation;
methods in Section~\ref{sec:practical} circumvent some of these issues.



\newcommand{\scaletable}{0.850}
\definecolor{cellhi}{hsb}{0.0, 0.75, 1}  
\newcommand{\hili}{\cellcolor{cellhi!20}}
\begin{table*}
    \caption{%
        The number of model updates or gradient computations required to reach
        a model of error at most $\epsilon$. Error is defined with loss $F(\wb_T)
        - F^\star \le \epsilon$ for smooth \& convex functions and
        $\alpha$-strongly convex ($\alpha$-SC) functions, and with gradient
        norm for smooth functions, $\min_{k=0, \ldots, T-1}\norm{\grad
        F(\wb_k)}\le \epsilon$.  All function classes are
        $\beta$-smooth, and for $\alpha$-strongly convex functions the
        condition number $\kappa$ is given by $\kappa = \beta/\alpha$.  The
        function class column in Table~\ref{fig:conv-table} is shared with
        Table~\ref{fig:conv-comp-table}.  See Section~\ref{sec:main} for details and
        references. Cells with minimum model updates/gradient computations (up
        to constants) with $n=60\cdot10^3$, $\varepsilon=0.01$, $\alpha=0.1$
        and $\beta=1$ are highlighted.
    }
    \centering
    \begin{subfigure}[b]{0.45\textwidth}
        \centering
        \scalebox{\scaletable}{
            \begin{tabular}{ r | c c c }
                \textbf{Function} & \multirow{2}{*}{\textbf{SGD}} &\textbf{Adaptive} & \textbf{Gradient}\\
                \textbf{class} &  &\textbf{batch sizes} &  \textbf{descent} \\
                \hline
                $\alpha$-SC & $\bigO{\sfrac{\kappa}{\beta\epsilon}}$   & \hili$\bigO{\kappa\log(\sfrac{1}{\epsilon})}$ & \hili$\bigO{\kappa\log(\sfrac{1}{\epsilon})}$\\
                Convex & $\bigO{\sfrac{1}{\epsilon^2}}$ & \hili$\bigO{\sfrac{1}{\epsilon}}$       & \hili$\bigO{\sfrac{1}{\epsilon}}$  \\
                Smooth & $\bigO{\sfrac{1}{\epsilon^4}}$ & \hili$\bigO{\sfrac{1}{\epsilon^2}}$       & \hili$\bigO{\sfrac{1}{\epsilon^2}}$  \\
            \end{tabular}
        }\vspace{0em}
        \caption{%
            Total number of \textbf{model updates} required during optimization.}\label{fig:conv-table}
    \end{subfigure}
    \hspace{1em}
    \begin{subfigure}[b]{0.47\textwidth}
        \centering
        \scalebox{\scaletable}{
        \begin{tabular}{ c c c }
            \multirow{2}{*}{\textbf{SGD}} &\textbf{Adaptive} & \textbf{Gradient}\\
            &\textbf{batch sizes} &  \textbf{descent} \\
            \hline
            \hili$\bigO{\sfrac{\kappa}{\beta\epsilon}}$ &$\bigO{\sfrac{\kappa}{\epsilon}\log(\sfrac{1}{\epsilon}})$ & $\bigO{n\kappa\log(\sfrac{1}{\epsilon}})$ \\
              \hili$\bigO{\sfrac{1}{\epsilon^2}}$  &\hili$\bigO{\sfrac{1}{\epsilon^2}}$ & $\bigO{\sfrac{n}{\epsilon}}$  \\
              $\bigO{\sfrac{1}{\epsilon^4}}$  &\hili$\bigO{\sfrac{1}{\epsilon^3}}$ & $\bigO{\sfrac{n}{\epsilon^2}}$  \\
        \end{tabular}
        }\vspace{0em}
        \caption{%
            Total number of \textbf{gradient computations} required during optimization.}\label{fig:conv-comp-table}
    \end{subfigure}
    \vspace{-1.0em}
    \label{fig:table}
\end{table*}

\subsection{Model updates}
\label{sec:model-updates}

Let's start in the context of $\alpha$-PL functions. In this setting, SGD
requires $\bigO{\sfrac{1}{\varepsilon}}$ model
updates~\cite[Thm.~4]{karimi2016linear}.  Gradient descent with a constant
learning rate requires $\log\parens{\sfrac{1}{\varepsilon}}$ model
updates~\cite[Thm.~1]{karimi2016linear}, as does SGD with geometrically
increasing batch sizes for strongly convex
functions~\cite[Cor.~5.2]{bottou2018opt}. We show that
$\log{\parens{\sfrac{1}{\varepsilon}}}$ model updates are also required when
the adaptive batch size is chosen appropriately:

\begin{thm}
    Let $\xb_k$ denote the $k$-th iterate of mini-batch SGD with step-size $\gamma$ on a $\beta$-smooth and $\alpha$-PL function $F$. If the batch size $B_k$ at each iteration $k$
    is given by
\begin{equation}\label{eq:batch-size-loss}
    B_k = \ceil*{\frac{c}{F(\wb_k) - F^\star}}
\end{equation}
and the learning rate $\gamma = \alpha/[\beta\parens{\alpha + M^2_U/2c}]$ for some constant $c > 0$, then
$$\E{F(\wb_T)} - F^\star \le \parens{1 - r}^T(F(\wb_0) - F^\star)$$
where $r \coloneqq \alpha^2/ \left(\beta\parens{\alpha + M^2_U/2c}\right)$.
This implies $T\ge \bigO{\log{\parens{\sfrac{1}{\varepsilon}}}}$ model updates are required to obtain $\wb_T$ such that $\E{F(\wb_T)} - F^\star \le \varepsilon$.
\label{thm:conv-smooth-PL}
\end{thm}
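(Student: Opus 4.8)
The plan is to run the standard one-step descent analysis for mini-batch SGD on a $\beta$-smooth function, and then use the adaptive choice of $B_k$ together with the $\alpha$-PL inequality to turn the resulting recursion into a geometric contraction. First I would start from $\beta$-smoothness applied to consecutive iterates $\wb_{k+1} = \wb_k - \frac{\gamma}{B_k}\sum_{s}\grad f_{i_s}(\wb_k)$, giving
\[
F(\wb_{k+1}) \le F(\wb_k) + \inner{\grad F(\wb_k), \wb_{k+1}-\wb_k} + \frac{\beta}{2}\norm{\wb_{k+1}-\wb_k}^2.
\]
Taking expectation over the sampled batch (conditioned on $\wb_k$), the cross term becomes $-\gamma\norm{\grad F(\wb_k)}^2$ since the mini-batch gradient is unbiased, and the quadratic term is controlled by the usual variance-plus-bias split: $\E\norm{\wb_{k+1}-\wb_k}^2 = \gamma^2\E\norm{\frac{1}{B_k}\sum_s \grad f_{i_s}(\wb_k)}^2 \le \gamma^2\norm{\grad F(\wb_k)}^2 + \frac{\gamma^2}{B_k}M^2(\wb_k)$, using that sampling is i.i.d.\ and $\frac1n\sum_i\norm{\grad f_i(\wb_k)}^2 = M^2(\wb_k)$. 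This yields
\[
\E{F(\wb_{k+1})} - F^\star \le \parens{F(\wb_k)-F^\star} - \gamma\parens{1-\tfrac{\beta\gamma}{2}}\norm{\grad F(\wb_k)}^2 + \frac{\beta\gamma^2}{2B_k}M^2(\wb_k).
\]

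Next I would plug in the adaptive batch size $B_k \ge c/(F(\wb_k)-F^\star)$, so that the variance term is bounded by $\frac{\beta\gamma^2}{2c}M^2(\wb_k)\parens{F(\wb_k)-F^\star} \le \frac{\beta\gamma^2 M^2_U}{2c}\parens{F(\wb_k)-F^\star}$. Then I would apply the $\alpha$-PL condition $\norm{\grad F(\wb_k)}^2 \ge 2\alpha\parens{F(\wb_k)-F^\star}$ to the (negative) gradient-norm term; as long as the coefficient $\gamma\parens{1-\tfrac{\beta\gamma}{2}}$ is nonnegative — which the prescribed $\gamma$ guarantees since $\gamma \le 1/\beta$ — this gives
\[
\E{F(\wb_{k+1})}-F^\star \le \parens{1 - 2\alpha\gamma\parens{1-\tfrac{\beta\gamma}{2}} + \tfrac{\beta\gamma^2 M^2_U}{2c}}\parens{F(\wb_k)-F^\star}.
\]
It then remains to verify that with $\gamma = \alpha/[\beta(\alpha + M^2_U/2c)]$ the contraction factor in parentheses equals exactly $1-r$ with $r = \alpha^2/(\beta(\alpha+M^2_U/2c))$; this is a direct (if slightly tedious) algebraic check — one finds $2\alpha\gamma - \alpha\beta\gamma^2 - \tfrac{\beta\gamma^2 M^2_U}{2c} = 2\alpha\gamma - \beta\gamma^2(\alpha + M^2_U/2c) = \alpha\gamma$, using the definition of $\gamma$, and $\alpha\gamma = r$. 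Taking total expectation and iterating the recursion from $k=0$ to $T-1$ gives $\E{F(\wb_T)}-F^\star \le (1-r)^T(F(\wb_0)-F^\star)$, and setting the right side $\le \varepsilon$ and solving gives $T \ge \log\parens{(F(\wb_0)-F^\star)/\varepsilon}/\log\parens{1/(1-r)} = \bigO{\log(1/\varepsilon)}$, since $r$ is a fixed constant (treating $\alpha,\beta,M^2_U,c$ as constants).

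The main obstacle — really the only subtle point — is the handling of $M^2(\wb_k)$ and the measurability of $B_k$. Since $B_k$ is a deterministic function of $\wb_k$, the conditional-expectation step is clean; but $M^2(\wb_k)$ is itself random (it depends on the trajectory), so bounding it by the deterministic constant $M^2_U = \max_k M^2(\wb_k)$ is what makes the recursion close uniformly in $k$. One should note this bound is somewhat circular in that $M^2_U$ is defined over the iterates actually produced, but since $M^2_U$ appears in both $\gamma$ and $r$ this is consistent — the statement is conditional on the realized value of $M^2_U$, or one treats $M^2_U$ as an a priori uniform bound on gradient norms along the optimization path. A secondary minor point is ensuring $1 - \tfrac{\beta\gamma}{2} \ge 0$; from $\gamma = \alpha/[\beta(\alpha+M^2_U/2c)] \le \alpha/(\beta\alpha) = 1/\beta$ we get $\tfrac{\beta\gamma}{2}\le \tfrac12$, so the coefficient is at least $\tfrac12 > 0$, and in fact one could even use the cruder bound $\gamma(1-\tfrac{\beta\gamma}{2}) \ge \gamma/2$ at the cost of a slightly different constant in $r$ — but the exact identity above recovers the stated $r$ precisely.
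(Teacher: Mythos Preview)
Your proposal is correct and follows essentially the same route as the paper's proof: both start from the $\beta$-smoothness descent inequality, take conditional expectation to get $\E\norm{\frac{1}{B_k}\sum_s \grad f_{i_s}(\wb_k)}^2 \le \norm{\grad F(\wb_k)}^2 + M^2(\wb_k)/B_k$, substitute the adaptive $B_k$ and bound $M^2(\wb_k)\le M_U^2$ (the paper packages this step as Lemma~\ref{lem:variance_gk}), apply the PL inequality, and then verify algebraically that the prescribed $\gamma$ yields the contraction factor $1-r$. Your observation about the circularity of $M_U^2$ is valid and worth flagging, though the paper simply treats it as an a priori bound without comment.
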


The proof is detailed in Appendix~\ref{sec:proof-conv-smooth-PL} and follows
from the definition of $B_k$, $\beta$-smooth and $\alpha$-PL.  This theorem can
also be applied to Euclidean distance from the optimal model for
$\alpha$-strongly convex functions because
$\sfrac{\alpha}{2}\norm{\wb_k-\wb^\star}_2^2 \le F(\wb_k) - F(\wb^\star)$.
  The learning rate $\gamma$ is typically a user-specified hyperparameter
  determined through trial-and-error~(e.g,~\cite{schaul2013, smith2015no}).
This
theorem makes a fairly standard assumption that the optimal training loss
\(F^\star\) is known, which influences $\gamma$ in~\citet[Sec.~1.1]{ward2018} and \citet[Eq.~15]{orr1997removing}.\footnote{For most
overparameterized neural nets, the optimal training loss is
0 or close to 0~\cite{belkin2018, zhang2016understanding, sal0trainingloss}.}


When $F$ is convex, the same adaptive batch size method obtains comparable
convergence rates to gradient descent. SGD requires
$\bigO{\sfrac{1}{\varepsilon^2}}$ model
updates~\cite[Thm.~6.3]{bubeck2015convex}. Gradient descent with constant
learning rate requires $\bigO{\sfrac{1}{\varepsilon}}$ model
updates~\cite[Thm.~3.3]{bubeck2015convex}, and has linear convergence if an
exact line search is used~\cite[Eq.~9.18]{boyd2004}.  Using adaptive batch
sizes with SGD also requires $\bigO{\sfrac{1}{\varepsilon}}$ model updates:

\begin{thm}
Let $\xb_k$ denote the $k$-th iterate of mini-batch SGD with step size $\gamma$ on some $\beta$-smooth and convex function $F$.
If the batch size $B_k$ at each iteration is given by Equation~\ref{eq:batch-size-loss} and $\gamma = (\beta + 1/c)^{-1}$,
then for any $T \geq 1$,
$$\EE\left[F\left(\widebar{\wb}_T\right)\right] -F^\star\leq \frac{r}{T}$$
where $r \coloneqq \|\wb_0-\wb^*\|^2\left(\beta + \frac{M_U^2}{c}\right) + F(\wb_0)-F^*$ and $\widebar{\wb}_T :=\frac{1}{T}\sum_{i=0}^{T-1} \wb_{i+1}$.
This implies $T\ge \sfrac{r}{\varepsilon}$ model updates are required to obtain $\wb_T$ such that $\E{F(\wb_T)} - F^\star \le \varepsilon$.
\label{thm:cnvx}
\end{thm}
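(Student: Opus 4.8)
The plan is to run the standard one-step analysis of mini-batch SGD on a $\beta$-smooth convex objective, the twist being that the batch-size rule~\ref{eq:batch-size-loss} is used to show the gradient estimate's variance decays in proportion to the current suboptimality $F(\wb_k)-F^\star$; this is precisely what upgrades the usual $\bigO{1/\sqrt{T}}$ guarantee to the $\bigO{1/T}$ guarantee claimed here, and it is the place where the adaptive batch size does its work.

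First I would fix notation and control the noise. Write $\vecg_k$ for the averaged mini-batch gradient used at step $k$, so that $\wb_{k+1}=\wb_k-\gamma\vecg_k$ and $\E{\vecg_k\mid\wb_k}=\grad F(\wb_k)$. Since $\vecg_k$ averages $B_k$ i.i.d.\ draws whose per-sample second moment is $M^2(\wb_k)\le M_U^2$, the conditional variance obeys $\E{\norm{\vecg_k-\grad F(\wb_k)}_2^2\mid\wb_k}\le M^2(\wb_k)/B_k$. Substituting $B_k\ge c/(F(\wb_k)-F^\star)$ from~\ref{eq:batch-size-loss} then gives the key bound $\E{\norm{\vecg_k}_2^2\mid\wb_k}\le\norm{\grad F(\wb_k)}_2^2+\tfrac{M_U^2}{c}(F(\wb_k)-F^\star)$, so the second-moment overshoot over $\norm{\grad F(\wb_k)}_2^2$ is $O(F(\wb_k)-F^\star)$ rather than $O(1)$.

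Next I would combine two one-step inequalities. The $\beta$-smoothness descent inequality stated after Definition~\ref{def:beta}, applied along $\wb_{k+1}=\wb_k-\gamma\vecg_k$ and then conditioned with the variance bound inserted, gives $\E{F(\wb_{k+1})\mid\wb_k}\le F(\wb_k)-\gamma(1-\tfrac{\beta\gamma}{2})\norm{\grad F(\wb_k)}_2^2+\tfrac{\beta\gamma^2 M_U^2}{2c}(F(\wb_k)-F^\star)$; expanding $\norm{\wb_{k+1}-\wb^\star}_2^2$ and using convexity in the form $\inner{\grad F(\wb_k),\wb_k-\wb^\star}\ge F(\wb_k)-F^\star$ gives $\E{\norm{\wb_{k+1}-\wb^\star}_2^2\mid\wb_k}\le\norm{\wb_k-\wb^\star}_2^2-2\gamma(F(\wb_k)-F^\star)+\gamma^2\norm{\grad F(\wb_k)}_2^2+\tfrac{\gamma^2 M_U^2}{c}(F(\wb_k)-F^\star)$. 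Since $\gamma=(\beta+1/c)^{-1}<1/\beta$, dividing the second inequality by $2\gamma$, solving for $F(\wb_k)-F^\star$, and substituting the result into the leading $F(\wb_k)-F^\star$ of the first inequality makes the net coefficient of $\norm{\grad F(\wb_k)}_2^2$ equal to $-\tfrac{\gamma}{2}(1-\beta\gamma)\le 0$; dropping it leaves the telescoping-ready inequality $\E{F(\wb_{k+1})\mid\wb_k}-F^\star\le\tfrac{1}{2\gamma}\bigl(\norm{\wb_k-\wb^\star}_2^2-\E{\norm{\wb_{k+1}-\wb^\star}_2^2\mid\wb_k}\bigr)+\tfrac{\gamma M_U^2(1+\beta\gamma)}{2c}(F(\wb_k)-F^\star)$.

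Finally I would take full expectations, sum over $k=0,\dots,T-1$, and telescope: the distance terms collapse to $\norm{\wb_0-\wb^\star}_2^2$, and the residual $\tfrac{\gamma M_U^2(1+\beta\gamma)}{2c}\sum_k\E{F(\wb_k)-F^\star}$ differs from $\sum_k\E{F(\wb_{k+1})-F^\star}$ only by the boundary term $F(\wb_0)-F^\star$, so it can be moved to the left-hand side because its coefficient is strictly below $1$ for the chosen $\gamma$ (a second use of the step-size choice). Rearranging, dividing by $T$, and applying Jensen's inequality to the convex $F$ then gives $\E{F(\widebar{\wb}_T)}-F^\star\le\tfrac1T\sum_{k=0}^{T-1}\bigl(\E{F(\wb_{k+1})}-F^\star\bigr)\le r/T$ with $\widebar{\wb}_T=\tfrac1T\sum_{i=0}^{T-1}\wb_{i+1}$, and $T\ge r/\varepsilon=\bigO{1/\varepsilon}$ follows because $r$ is $\varepsilon$-independent. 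I expect the main obstacle to be exactly this constant bookkeeping: keeping the variance-residual coefficient strictly below one after it is folded in (this is what pins down the admissible step sizes and the precise form of $r$, including the $M_U^2/c$ factor), and tracking the index shift carefully so that the controlled quantity is $F$ at the average of $\wb_1,\dots,\wb_T$ — which is why the statement averages $\wb_{i+1}$ and why $r$ carries an $F(\wb_0)-F^\star$ term rather than a pure distance term. The remaining manipulations are the routine smooth-convex SGD computation.
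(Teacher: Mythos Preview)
Your proposal is correct and follows essentially the same route as the paper: establish the variance bound $\E{\norm{\grad F(\wb_k)-\vecg_k}_2^2\mid\wb_k}\le (M_U^2/c)(F(\wb_k)-F^\star)$ (the paper's Lemma~\ref{lem:variance_gk}), feed it into a one-step smooth--convex SGD inequality to get $\E{F(\wb_{k+1})-F^\star}\le\tfrac{1}{2\gamma}\bigl(\norm{\wb_k-\wb^\star}^2-\E{\norm{\wb_{k+1}-\wb^\star}^2}\bigr)+\text{(coef.}<1)\cdot\E{F(\wb_k)-F^\star}$, then sum, shift the index, telescope, and apply Jensen. The only cosmetic difference is that the paper imports the one-step inequality directly from the proof of~\citet[Thm.~6.3]{bubeck2015convex} (obtaining the residual coefficient exactly $\tfrac12$), whereas you rebuild it by hand from the descent lemma plus the distance expansion; your constant bookkeeping is slightly looser but, as you anticipated, the admissible step size keeps the residual coefficient below one and the argument closes the same way.
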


This proof adapts classic convergence analysis of SGD~\cite{bubeck2015convex}
and is in Appendix~\ref{app:cnvx}.

\paragraph{Key Lemma}

Theorems~\ref{thm:conv-smooth-PL} and~\ref{thm:cnvx} rely on a key lemma, one
that controls the gradient approximation error $\E{\norm{\grad F(\wb_k) -
\bm{g}_k}_2^2}$ as a function of the number of model updates $k$
and batch size $B_k$.  When the batch size is
grown according to Eq.~\ref{eq:batch-size-loss}, the gradient approximation
error for model update $k$ is bounded by the loss of model $\wb_k$:

\begin{lemma} \label{lem:variance_gk}

Let the batch size $B_k$ be chosen as in Eq.~\ref{eq:batch-size-loss}.
Then when the gradient estimate
$\bm{g}_k = \sfrac{1}{B_k}\sum_{i=1}^{B_k} \grad f_{i_s}(\wb_k)$
is created with $i_s$ chosen uniformly at random, then
    $$\E{\|\nabla F(\wb_k)-\bm{g}_k\|_2^2\given \wb_k} \leq (F(\wb_k)-F^\star)M_U^2c^{-1}$$
\end{lemma}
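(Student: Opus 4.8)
The plan is to bound the variance of the mini-batch gradient estimate $\bm{g}_k$ in the standard way---sampling with replacement gives independence across the $B_k$ draws---and then substitute the choice of $B_k$ from Eq.~\ref{eq:batch-size-loss}. First I would observe that, conditioned on $\wb_k$, each summand $\grad f_{i_s}(\wb_k)$ is an unbiased estimate of $\grad F(\wb_k)$ (since $i_s$ is uniform on $\{1,\dots,n\}$), so $\E{\bm{g}_k \given \wb_k} = \grad F(\wb_k)$ and $\grad F(\wb_k) - \bm{g}_k$ has mean zero. Averaging $B_k$ i.i.d.\ terms then yields
$$\E{\norm{\grad F(\wb_k) - \bm{g}_k}_2^2 \given \wb_k} = \frac{1}{B_k}\var_{i_s}\!\left(\grad f_{i_s}(\wb_k)\right) \le \frac{1}{B_k}\cdot\frac{1}{n}\sum_{i=1}^n \norm{\grad f_i(\wb_k)}_2^2 = \frac{M^2(\wb_k)}{B_k},$$
where the inequality is just $\var(X) \le \E{\norm{X}^2}$ and the last equality is the definition of $M^2(\wb_k)$.

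Next I would plug in $B_k = \ceil{c/(F(\wb_k) - F^\star)}$. Since $\ceil{x} \ge x$ for all $x$, we have $B_k \ge c/(F(\wb_k)-F^\star)$, hence $1/B_k \le (F(\wb_k)-F^\star)/c$. Combining with the variance bound and using $M^2(\wb_k) \le M_U^2$ (by definition of $M_U^2 = \max\mathcal{M}$, valid since $k < T$) gives
$$\E{\norm{\grad F(\wb_k) - \bm{g}_k}_2^2 \given \wb_k} \le \frac{M^2(\wb_k)}{B_k} \le (F(\wb_k)-F^\star)\,\frac{M^2(\wb_k)}{c} \le (F(\wb_k)-F^\star)\,M_U^2 c^{-1},$$
which is exactly the claimed bound.

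I don't anticipate a serious obstacle here---this is a routine mini-batch variance calculation---but the one point that needs care is the sampling model. The estimate $\bm{g}_k = \sfrac{1}{B_k}\sum_{i=1}^{B_k}\grad f_{i_s}(\wb_k)$ as written uses $B_k$ draws of $i_s$; for the clean $1/B_k$ variance reduction I am implicitly assuming these are independent (sampling with replacement), matching the "$i_s$ chosen uniformly at random'' description and the update in Eq.~\ref{eq:mini-batch}. If instead one samples without replacement, the same bound still holds (indeed with a better constant, via the finite-population correction factor $(n-B_k)/(n-1) \le 1$), so the statement is robust to that choice; I would just note this rather than belabor it. The only other thing to keep honest is that $M_U^2$ is a maximum over the realized trajectory, so $M^2(\wb_k) \le M_U^2$ holds deterministically for every $k < T$, which is all that is needed.
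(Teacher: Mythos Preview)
Your proposal is correct and follows essentially the same approach as the paper: both compute the variance of the mini-batch average to obtain $\E{\norm{\grad F(\wb_k)-\bm{g}_k}_2^2\given\wb_k}\le M^2(\wb_k)/B_k$ (the paper does this by explicitly expanding the squared norm and cross terms, you invoke the standard i.i.d.\ variance-of-average formula), then substitute $B_k \ge c/(F(\wb_k)-F^\star)$ and bound $M^2(\wb_k)\le M_U^2$. Your remarks on the sampling model and on $M_U^2$ being a trajectory maximum are accurate and, if anything, more careful than the paper's own presentation.
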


The proof is Appendix~\ref{sec:app-conv} and relies on substituting the
definition of the batch size $B_k$ into the gradient approximation error,
$\E{\norm{\grad F(\wb_k) - \bm{g}_k}_2^2}$.


When $F$ is smooth and non-convex, we'll provide an upper bound on the number
of model updates required to find an $\epsilon$-approximate critical point so
that $\norm{\grad F(\xb)}\le \epsilon$, which requires computing the adaptive
batch size differently. In this setting, SGD requires
$\bigO{\sfrac{1}{\epsilon^4}}$ model updates~\cite[Thm.~2]{yin2017gradient},
and gradient descent requires $\bigO{\sfrac{1}{\epsilon^2}}$ model
updates~\cite[Thm.~2]{jin2017}. Adaptive batch sizes require
$\bigO{\sfrac{1}{\varepsilon^2}}$ model updates:
\begin{thm}
Let $\xb_k$ denote the $k$-th iterate of mini-batch SGD on a $\beta$-smooth function $F$. If the batch size $B_k$ at each iteration satisfies
\begin{equation}
    B_k = \ceil*{\frac{c}{\norm{\grad F(\wb_k)}_2^2}}
    \label{eq:batch-size-grad}
\end{equation}
    for some $c > 0$
    and the step size $\gamma = \beta^{-1} \cdot c/(c + M_L^2)$,
    then for any $T \geq 1$,
    $$\min_{k=0, \ldots, T-1}\norm{\grad F(\wb_k)} \le \sqrt{\frac{r}{T}}$$
    where $r \coloneqq 2(F(\wb_0) - F^\star)\cdot \beta \parens{M_L^2c^{-1} + 1}$.
    This implies $T\ge \sfrac{r^2}{\varepsilon^2}$ model updates are required to obtain $\wb_T$ such that $\min_{k=0, \ldots, T-1}\norm{\grad F(\wb_k)} \le \varepsilon$.
\label{thm:smooth}
\end{thm}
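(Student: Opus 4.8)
\emph{Proof plan.} The argument is the standard one-step descent telescoping for $\beta$-smooth nonconvex objectives, with the twist that the mini-batch noise is absorbed by the adaptive batch size rather than forcing the step size to zero. Write the update as $\wb_{k+1} = \wb_k - \gamma\vecg_k$ with $\vecg_k = \tfrac{1}{B_k}\sum_{j=1}^{B_k}\grad f_{i_j}(\wb_k)$ and $i_j$ drawn uniformly at random, so $\E{\vecg_k \mid \wb_k} = \grad F(\wb_k)$. First I would apply the $\beta$-smoothness upper bound (the inequality quoted just after Definition~\ref{def:beta}) to $\wb_{k+1}$ and $\wb_k$ to get $F(\wb_{k+1}) \le F(\wb_k) - \gamma\inner{\grad F(\wb_k),\vecg_k} + \tfrac{\beta\gamma^2}{2}\norm{\vecg_k}_2^2$, then take $\E{\cdot \mid \wb_k}$ and use unbiasedness together with $\E{\norm{\vecg_k}_2^2 \mid \wb_k} = \norm{\grad F(\wb_k)}_2^2 + \E{\norm{\vecg_k - \grad F(\wb_k)}_2^2 \mid \wb_k}$. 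This produces
\[
\E{F(\wb_{k+1}) \mid \wb_k} \le F(\wb_k) - \gamma\norm{\grad F(\wb_k)}_2^2 + \tfrac{\beta\gamma^2}{2}\big(\norm{\grad F(\wb_k)}_2^2 + \E{\norm{\vecg_k - \grad F(\wb_k)}_2^2 \mid \wb_k}\big).
\]

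The crux is the next step: controlling the variance $\E{\norm{\vecg_k - \grad F(\wb_k)}_2^2 \mid \wb_k}$. Since the $B_k$ samples are i.i.d., this is at most $M^2(\wb_k)/B_k$, and because the prescribed $B_k = \ceil*{c/\norm{\grad F(\wb_k)}_2^2}$ satisfies $B_k \ge c/\norm{\grad F(\wb_k)}_2^2$ (the ceiling only enlarges it), the variance is at most $M^2(\wb_k)\,\norm{\grad F(\wb_k)}_2^2 / c$, i.e. a uniform constant multiple of $\norm{\grad F(\wb_k)}_2^2$. This is the non-convex analogue of Lemma~\ref{lem:variance_gk}, with the gradient-norm batch size in place of the loss-based one, and it is exactly what makes the noise term $\bigO{\gamma^2\norm{\grad F(\wb_k)}_2^2}$ rather than an irreducible $\bigO{\gamma^2}$. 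Substituting, the per-step bound becomes $\E{F(\wb_{k+1}) \mid \wb_k} \le F(\wb_k) - \gamma\norm{\grad F(\wb_k)}_2^2\big(1 - \tfrac{\beta\gamma}{2}(1 + M^2(\wb_k)/c)\big)$, and the stated step size $\gamma = \beta^{-1}c/(c + M_L^2)$ is chosen precisely so that the parenthesized contraction factor stays bounded below by $\tfrac12$ along the whole trajectory, giving the clean descent inequality $\E{F(\wb_{k+1}) \mid \wb_k} \le F(\wb_k) - \tfrac{\gamma}{2}\norm{\grad F(\wb_k)}_2^2$. (If $M_L^2 = 0$ then some iterate is already stationary and the claim is trivial, so assume $M_L^2 > 0$.)

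Finally I would take total expectations, sum over $k = 0,\dots,T-1$, and telescope: the left-hand side collapses to $\tfrac{\gamma}{2}\sum_{k=0}^{T-1}\E{\norm{\grad F(\wb_k)}_2^2}$ and the right-hand side is at most $F(\wb_0) - \E{F(\wb_T)} \le F(\wb_0) - F^\star$. Bounding the sum below by $T\min_{k< T}\E{\norm{\grad F(\wb_k)}_2^2}$ and rearranging yields $\min_{k< T}\E{\norm{\grad F(\wb_k)}_2^2} \le 2(F(\wb_0) - F^\star)/(\gamma T)$, which equals $r/T$ once $\gamma$ is substituted; passing from $\min_k\E{\cdot}$ to $\E{\min_k\cdot}$ and then invoking Jensen's inequality delivers $\min_{k< T}\norm{\grad F(\wb_k)}_2 \le \sqrt{r/T}$, and inverting this relation for the target accuracy $\varepsilon$ gives the stated number of model updates. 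I expect the only genuine obstacle to be the variance-control step: confirming the $M^2(\wb_k)/B_k$ bound for the sampling scheme in use, verifying that the ceiling in $B_k$ works in our favor, and correctly tracking which uniform bound on the trajectory's gradient second moments pins down the admissible step size; everything after that is the textbook smooth-nonconvex SGD telescoping argument.
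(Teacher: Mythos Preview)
Your plan matches the paper's proof (Appendix~\ref{app:smooth}) step for step: apply the $\beta$-smoothness descent inequality, take conditional expectation, expand the mini-batch second moment as $M^2(\wb_k)/B_k + \tfrac{B_k-1}{B_k}\norm{\grad F(\wb_k)}_2^2$, use $B_k \ge c/\norm{\grad F(\wb_k)}_2^2$ to turn the variance term into a multiple of $\norm{\grad F(\wb_k)}_2^2$, then telescope over $k$ and divide by $T$.

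The concern you flagged at the end is precisely where the argument is fragile, and your specific claim there does not hold as written. With $\gamma = c/\bigl(\beta(c + M_L^2)\bigr)$ the contraction factor is
\[
1 - \tfrac{\beta\gamma}{2}\bigl(1 + M^2(\wb_k)/c\bigr) \;=\; 1 - \frac{c + M^2(\wb_k)}{2(c + M_L^2)},
\]
which is $\ge \tfrac12$ only when $M^2(\wb_k) \le M_L^2$; by definition of $M_L^2$ that holds with equality at a single iterate and fails (the factor can even go negative) at all others. The paper makes the identical move (writing ``when $c = c'M_L^2$'' after bounding $M^2(\wb_k)/B_k$), so you are faithfully reproducing its argument, including this slip. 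Replacing $M_L^2$ by $M_U^2$ in both the step size and the constant $r$ makes the inequality go through cleanly; that is the version you should expect to be able to prove rigorously.
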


The proof adapts a proof by~\citet[Thm.~2]{yin2017gradient} to the batch size
in Equation~\ref{eq:batch-size-grad} and is detailed in
Appendix~\ref{app:smooth}.

\subsection{Number of gradient computations}\label{sec:num-examples}

While the convergence rates above show that adaptively chosen batch sizes can
lead to fast convergence in terms of the number of model updates, this is not a
good metric for the total amount of work performed. When the model is close to
the optimum, the batch size will be large but only one model update will be
computed. A better metric for the amount of work performed is on the number of
gradient computations required to reach a model of a particular error.


The number of gradient computations required by the adaptive batch size method
is similar to the number of gradient computations for SGD. The number of
gradient computations for SGD and gradient descent are reflected in the model
update count; SGD and gradient descent require computing $1$ and $n$ gradients
per model update respectively. These values are concisely summarized in
Table~\ref{fig:table}.\footnote{In this table, line searches are not performed for
gradient descent on convex functions.}

Let's start with $\alpha$-PL and convex functions. When increasing the batch
size geometrically for $\alpha$-strong convex functions, only
$\bigO{\sfrac{1}{\epsilon}}$ gradient computations are
required~\cite[Thm.~5.3]{bottou2018opt}.

\begin{cor}\label{cor:eg-pl}%
    When $F$ is $\alpha$-PL,
    no more than
    $4cr\log\parens{\sfrac{1}{\epsilon}}/\epsilon$
    gradient computations are required
    in
    Theorem~\ref{thm:conv-smooth-PL} for $c$ and $r$ defined therein.
\end{cor}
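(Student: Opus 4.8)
The plan is to write the total work as a product of the two natural factors --- the number of model updates $T$ needed to reach expected loss $\epsilon$, and the size of each batch $B_k$ --- and to bound each factor from above. Concretely, the number of gradient computations is $G=\sum_{k=0}^{T-1}B_k$, and I will bound $T$ using the linear rate of Theorem~\ref{thm:conv-smooth-PL} and bound $B_k$ using the batch rule \eqref{eq:batch-size-loss} together with the stopping criterion. For \textbf{Step 1 (bounding $T$)}: Theorem~\ref{thm:conv-smooth-PL} gives $\E{F(\wb_T)}-F^\star\le(1-r)^T(F(\wb_0)-F^\star)$, so it suffices to run until $(1-r)^T(F(\wb_0)-F^\star)\le\epsilon$, i.e.\ to take $T=\ceil*{\log\!\big((F(\wb_0)-F^\star)/\epsilon\big)\big/\log(1/(1-r))}$. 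Using the elementary inequality $\log(1/(1-r))\ge r$ and normalizing $F(\wb_0)-F^\star\le 1$, this is at most $1+r^{-1}\log(1/\epsilon)$; this is exactly the $T\ge\bigO{\log(1/\epsilon)}$ statement of the theorem with the $r$-dependence made explicit.

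For \textbf{Step 2 (bounding $B_k$)}: the rule $B_k=\ceil*{c/(F(\wb_k)-F^\star)}$ is a decreasing function of the loss, and since the process is stopped the first time the loss reaches $\epsilon$, every active iterate $k<T$ satisfies $F(\wb_k)-F^\star\ge\epsilon$, whence $B_k\le\ceil*{c/\epsilon}\le c/\epsilon+1\le 2c/\epsilon$ (the last step for $c\ge\epsilon$). Then \textbf{Step 3 (combine)}: $G=\sum_{k=0}^{T-1}B_k\le T\cdot(2c/\epsilon)$, and substituting the bound on $T$ from Step 1 gives $G\le 2c\epsilon^{-1}\big(1+r^{-1}\log(1/\epsilon)\big)$, which matches the form of the stated bound $\bigO{c\,r^{-1}\epsilon^{-1}\log(1/\epsilon)}$ after absorbing the $+1$ for small $\epsilon$.

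The main obstacle is the measurability/convexity subtlety: $B_k$ is governed by the \emph{realized} loss $F(\wb_k)$, while Theorem~\ref{thm:conv-smooth-PL} controls only $\E{F(\wb_k)}$, and since $x\mapsto 1/x$ is convex, Jensen's inequality runs the wrong way, so one cannot simply insert the expected loss into the batch-size formula. The fix is the stopping-time viewpoint used above: view $T$ as the first time the (realized) loss is at most $\epsilon$, note that on $\{k<T\}$ the realized loss is $\ge\epsilon$ so that $B_k\le 2c/\epsilon$ holds \emph{deterministically} (or almost surely), and control $T$ --- in expectation, or via the deterministic worst case from the contraction --- by Step 1. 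Finally, a tighter variant summing $\sum_k c/(F(\wb_k)-F^\star)$ against the geometric decay $(1-r)^k(F(\wb_0)-F^\star)$ removes the $\log(1/\epsilon)$ and gives $\bigO{c\,r^{-1}\epsilon^{-1}}$; I present the coarser argument above because it needs only the crude per-step estimate $F(\wb_k)-F^\star\ge\epsilon$ and lines up with the stated form of the corollary.
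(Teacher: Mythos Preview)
Your approach is essentially the same as the paper's: the paper isolates your Step~2 as a separate lemma (Lemma~\ref{lemma:num_examples}), which bounds $\sum_{k} B_k \le 4B_0(F(\wb_0)-F^\star)T/\epsilon$ under the assumption $F(\wb_k)-F^\star\ge\epsilon/2$, and then your Step~1/Step~3 correspond to plugging in the iteration bound from Theorem~\ref{thm:conv-smooth-PL}. The only cosmetic differences are that the paper uses the lower bound $\epsilon/2$ rather than $\epsilon$ (hence the constant $4$ instead of your $2$), and it writes $c = B_0(F(\wb_0)-F^\star)$ via Algorithm~\ref{alg:adaptive-max-batch}.

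One remark on the ``measurability/convexity subtlety'' you flag: you are right that this is a genuine issue, and the paper does not resolve it either --- Lemma~\ref{lemma:num_examples} simply \emph{assumes} $F(\wb_k)-F^\star\in[\epsilon/2,\epsilon]$ along the trajectory and the corollary applies it without further justification. Your stopping-time reformulation is a reasonable way to make the per-step bound $B_k\le 2c/\epsilon$ hold almost surely on $\{k<T\}$, but as you note it then shifts the burden to controlling the random stopping time $T$, for which the in-expectation contraction of Theorem~\ref{thm:conv-smooth-PL} does not give a deterministic bound. So your argument is at the same level of rigor as the paper's; neither fully closes this gap, and the paper explicitly acknowledges that Lemma~\ref{lemma:num_examples} is not tight.
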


\begin{cor}\label{cor:eg-cnvx}%
    For convex and $\beta$-smooth functions $F$,
    no more than
    $4c r/\epsilon^2$
    gradient computations are required
    in Theorem~\ref{thm:cnvx}
    for $c$ and $r$ defined therein.
\end{cor}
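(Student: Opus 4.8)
The quantity to control is the total number of gradient evaluations, which for the scheme in Theorem~\ref{thm:cnvx} is $N=\sum_{k=0}^{T-1}B_k$ with $B_k=\ceil*{c/(F(\wb_k)-F^\star)}$ and $T$ the number of model updates actually performed. My plan is to wrap the algorithm in the obvious early-stopping rule and bound $N$ term by term: halt as soon as either some iterate satisfies $F(\wb_k)-F^\star\le\epsilon$, or $T=\ceil*{r/\epsilon}$ updates have been made (in which case the reported model is $\widebar{\wb}_T$, which by Theorem~\ref{thm:cnvx} satisfies $\E{F(\widebar{\wb}_T)}-F^\star\le r/T\le\epsilon$). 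Either way the returned model meets the $\epsilon$ target in the same sense as Theorem~\ref{thm:cnvx}, and — crucially — every update that is actually performed is made from an iterate with $F(\wb_k)-F^\star>\epsilon$, which is exactly what caps each $B_k$.

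With this rule in place the estimate is mechanical. As long as the algorithm is still running, $B_k=\ceil*{c/(F(\wb_k)-F^\star)}\le\ceil*{c/\epsilon}$, and the number of such updates is at most $T\le\ceil*{r/\epsilon}$ by Theorem~\ref{thm:cnvx}. Hence $N\le\ceil*{r/\epsilon}\ceil*{c/\epsilon}$; expanding the two ceilings and using that $\epsilon$ is small relative to the fixed constants $c$ and $r$ (if $\epsilon>c$ then $B_k\equiv1$ and the claim is trivial; if $\epsilon>r$ then $T=1$), each ceiling is at most twice its argument and $N\le (2r/\epsilon)(2c/\epsilon)=4cr/\epsilon^2$, as claimed. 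The factor $4$ is precisely the slack that absorbs the two ceilings.

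The one genuinely non-routine point — and the step I would be most careful about — is the early-stopping wrapper and what it rests on: (i) that evaluating $F(\wb_k)$ is free in this model (it is: the adaptive rule already evaluates $F(\wb_k)$ every update to form $B_k$), so the stopping test costs no gradients; and (ii) that when the $\ceil*{r/\epsilon}$-update budget is exhausted the guarantee one reports is the in-expectation bound on $\widebar{\wb}_T$ inherited verbatim from Theorem~\ref{thm:cnvx}, not a stronger last-iterate statement. I would also sanity-check that the argument never needs a lower bound on $B_k$ or on $F(\wb_0)-F^\star$, and that the bound $4cr/\epsilon^2$ is meant to grow with $c$ (large $c$ forces large batches, so this is consistent, not an artifact). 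The companion Corollary~\ref{cor:eg-pl} for $\alpha$-PL functions would follow the same template, with $T=\bigO{\log(1/\epsilon)/r}$ updates in place of $\ceil*{r/\epsilon}$.
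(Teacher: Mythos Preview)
Your argument is correct and is essentially the same as the paper's. The paper packages the core step as Lemma~\ref{lemma:num_examples}: assume $F(\wb_k)-F^\star\ge\epsilon/2$ throughout, bound each $B_k\le 2c/(F(\wb_k)-F^\star)\le 4c/\epsilon$, and multiply by $T=r/\epsilon$; your early-stopping wrapper with threshold $\epsilon$ achieves the same per-step bound $B_k\le\ceil*{c/\epsilon}$ and the same product, with the two factors of~$2$ coming from the two ceilings instead of one ceiling and one halving of the threshold. Your treatment of the edge cases $\epsilon>c$ and $\epsilon>r$, and of what is reported on early versus budget-exhausted termination, is actually more explicit than the paper's.
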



Now, let's look at the gradient computations required for smooth functions. For
illustration, let's assume the batch size in Eq.~\ref{eq:batch-size-grad} is
given by an oracle and does not require any gradient computation.

\begin{cor}
    For $\beta$-smooth functions $F$,
    no more than
    $4cr/\epsilon^3$
    gradient computations are required to estimate the loss function's gradient
    in Theorem~\ref{thm:smooth}
    for $c$ and $r$ therein.
    \label{cor:eg-smooth}
\end{cor}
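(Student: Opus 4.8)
The plan is to count the total number of sampled gradients $N := \sum_{k=0}^{T-1} B_k$, where $T$ is the number of model updates executed before $\min_{k=0,\dots,T-1}\norm{\grad F(\wb_k)}_2 \le \epsilon$ is first achieved. Theorem~\ref{thm:smooth} supplies the first ingredient: since that minimum is at most $\sqrt{r/T}$, running $T = \bigO{r/\epsilon^2}$ model updates suffices, and because $T$ is the first such time, every iterate with $k<T$ has $\norm{\grad F(\wb_k)}_2 > \epsilon$. The second ingredient is the telescoped descent estimate behind Theorem~\ref{thm:smooth}: substituting the batch-size rule~\eqref{eq:batch-size-grad} into the $\beta$-smoothness inequality and taking the prescribed step size makes the mini-batch variance --- which the variance argument underlying Lemma~\ref{lem:variance_gk} controls by $M^2(\wb_k)/B_k \le M^2(\wb_k)\norm{\grad F(\wb_k)}_2^2 / c$ --- absorb half of the $-\gamma\norm{\grad F(\wb_k)}_2^2$ per-step decrease, so telescoping $\E{F(\wb_{k+1}) \mid \wb_k} \le F(\wb_k) - \tfrac\gamma2 \norm{\grad F(\wb_k)}_2^2$ over $k=0,\dots,T-1$ and using $\E{F(\wb_T)} \ge F^\star$ yields $\sum_{k=0}^{T-1} \E{\norm{\grad F(\wb_k)}_2^2} \le 2(F(\wb_0) - F^\star)/\gamma = r$.

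I would then combine these. From $\lceil y \rceil \le 2y$ for $y \ge 1$ (and $\lceil y\rceil = 1$ otherwise, contributing an additive $\bigO{T}$ to $N$), $B_k \le 1 + 2c/\norm{\grad F(\wb_k)}_2^2$. Bounding each $\norm{\grad F(\wb_k)}_2^{-2}$ crudely by $\epsilon^{-2}$ would only give $N = \bigO{cT/\epsilon^2}$; instead I would spend just one of the two gradient-norm powers against the stopping threshold, writing $\norm{\grad F(\wb_k)}_2^{-2} = \norm{\grad F(\wb_k)}_2^{-3}\,\norm{\grad F(\wb_k)}_2 \le \epsilon^{-3}\norm{\grad F(\wb_k)}_2$, which is legitimate precisely because $\norm{\grad F(\wb_k)}_2 \ge \epsilon$. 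Summing gives
\[ N \;\le\; T \;+\; 2c\,\epsilon^{-3} \sum_{k=0}^{T-1} \norm{\grad F(\wb_k)}_2 , \]
so the whole estimate reduces to bounding the $\ell_1$-type sum $\sum_{k=0}^{T-1}\norm{\grad F(\wb_k)}_2$ and substituting $T = \bigO{r/\epsilon^2}$, which should collapse to the advertised $N = \bigO{cr/\epsilon^3}$. Corollaries~\ref{cor:eg-pl} and~\ref{cor:eg-cnvx} follow the same template, with the loss replacing the gradient norm and the relevant $T$ drawn from Theorems~\ref{thm:conv-smooth-PL} and~\ref{thm:cnvx}.

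The hard part is exactly that final reduction. The telescoping delivers an $\ell_2$ bound $\sum_k \norm{\grad F(\wb_k)}_2^2 \le r$ on the gradient-norm sequence, whereas $N$ is governed by the $\ell_1$ sum $\sum_k \norm{\grad F(\wb_k)}_2$; applying Cauchy--Schwarz, $\sum_k\norm{\grad F(\wb_k)}_2 \le \sqrt{T}\sqrt{r}$, and then $T = \bigO{r/\epsilon^2}$ reintroduces a factor $\epsilon^{-1}$, leaving only $\bigO{cr/\epsilon^4}$. Getting the clean $\epsilon^{-3}$ therefore seems to require a sharper handle on the gradient-norm trajectory than the energy bound alone --- for instance a dyadic accounting of how many iterates have $\norm{\grad F(\wb_k)}_2 \in [2^j\epsilon, 2^{j+1}\epsilon)$, weighting each scale both by its population (bounded through $\sum_k\norm{\grad F(\wb_k)}_2^2 \le r$) and by the corresponding batch size, which is of order $c/(2^j\epsilon)^2$ --- or an argument exploiting that the adaptive rule keeps the per-step variance proportional to $\norm{\grad F(\wb_k)}_2^2$. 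The remaining pieces --- the hitting-time bound from Theorem~\ref{thm:smooth}, the ceiling manipulation, and the descent telescoping --- are routine.
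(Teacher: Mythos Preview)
Your analysis is more careful than the paper's, and the difficulty you isolate in the last paragraph is real. The paper's argument does not resolve it; instead it follows the template of Lemma~\ref{lemma:num_examples}, which takes as a \emph{hypothesis} that the controlling quantity lies in the bracket $[\epsilon/2,\epsilon]$ throughout the run. Transported to the smooth setting, the paper effectively assumes $\norm{\grad F(\wb_k)}_2^2 \ge \epsilon/2$, under which each batch satisfies
\[
B_k \;=\; \ceil*{\frac{c}{\norm{\grad F(\wb_k)}_2^2}} \;\le\; \frac{2c}{\norm{\grad F(\wb_k)}_2^2} \;\le\; \frac{4c}{\epsilon},
\]
so $\sum_{k=1}^{T} B_k \le 4cT/\epsilon$, and substituting $T \le r/\epsilon^2$ from Theorem~\ref{thm:smooth} gives $4cr/\epsilon^3$ directly. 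That is the entire argument: three lines, no telescoping of the energy sum, no Cauchy--Schwarz, no dyadic decomposition.

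The paper is explicit that this is heuristic: immediately after the corollaries it notes that they ``rely on Lemma~\ref{lemma:num_examples}, which is not tight,'' and that closing the gap would require \emph{lower} bounds on the loss (equivalently, here, on $\norm{\grad F(\wb_k)}_2^2$) along the trajectory. Your route --- using only the legitimate stopping-time inequality $\norm{\grad F(\wb_k)}_2 > \epsilon$ together with the energy bound $\sum_k \norm{\grad F(\wb_k)}_2^2 \le r$ of Corollary~\ref{cor:smooth-grad} --- is the rigorous one, and as you correctly compute it yields only $\bigO{cr/\epsilon^4}$. So you have not missed an idea present in the paper's proof; rather, you have exposed that the claimed $\epsilon^{-3}$ rests on the bracket hypothesis of Lemma~\ref{lemma:num_examples}, which the paper leaves unverified.
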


Proof is delegated to Appendix~\ref{app:num-examples}.
Corollaries~\ref{cor:eg-pl},~\ref{cor:eg-cnvx}~and~\ref{cor:eg-smooth} rely on
Lemma~\ref{lemma:num_examples}, which is not tight. Tightening this bound
requires finding \textit{lower} bounds on model loss, a statement of the form
$F(\wb_k) - F^\star \ge g(\epsilon, k)$ for some function $g$. There are
classical bounds of this sort for gradient descent~\cite[Thms.~2.1.7
and~2.1.13]{nesterov2013a}, and more recent lower bounds for
SGD~\cite{nguyen2018tight}. However, deriving a comprehensive understanding of
lower bounds for mini-batch SGD remains an open problem.

\section{Experimental results \& Practical considerations}\label{sec:practical}

In this section, we first show that the theory above works as expected: far
fewer model are required to obtain a model of a particular loss, and the total
number of gradient computations is the same as standard mini-batch SGD.
However, the implementation above is impractical: the batch size requires
significant computation. We suggest some workarounds to address these practical
issues, and provide experiments that compare the proposed method with relevant
work.

In this section, two performance metrics are relevant: the number of gradient
computations and model updates to reach a particular accuracy. These two
metrics can be treated as proxies for energy and time respectively. A single
gradient computation requires a fixed number of floating point
operations,\footnote{For deterministic models.} which requires a fixed amount
of energy. As discussed in Section~\ref{sec:contribution}, the wall-clock time
required to complete one model update can be (almost) agnostic to the batch
size with certain distributed systems.

\subsection{Synthetic simulations}\label{synthetic-simulations-with-a-non-convex-function}\label{sec:synth-NN}

\begin{figure*}[t]
    \centering
    \begin{subfigure}[b]{0.30\textwidth}
        \centering
        \includegraphics[width=1.0\textwidth]{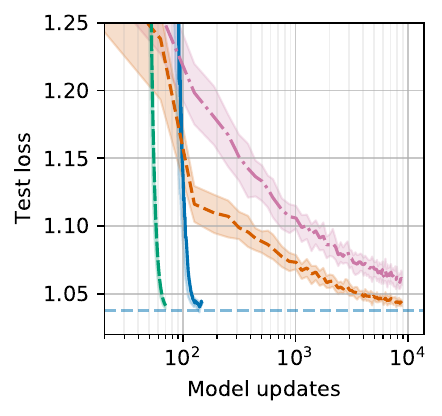}
        \caption{%
            The number of model updates required to reach a particular test accuracy.
        }\label{fig:synth-NN-updates}
    \end{subfigure}
    \hspace{1em}
    \begin{subfigure}[b]{0.30\textwidth}
        \centering
        \includegraphics[width=1.0\textwidth]{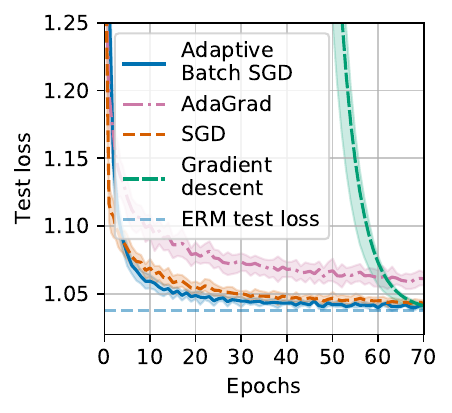}
        \caption{%
            The number of epochs required to be processed to reach a particular test accuracy.
        }\label{fig:synth-NN-eg}
    \end{subfigure}
    \hspace{1em}
    \begin{subfigure}[b]{0.30\textwidth}
        \centering
        \includegraphics[width=1.0\textwidth]{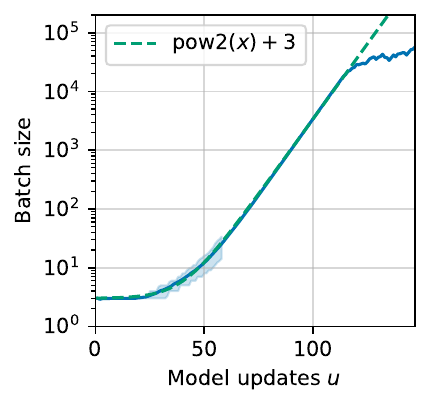}
        \caption{
            The batch sizes and an exponential line.
              In the legend, $x = 0.17(u-31)$ for $u$ model updates.
        }\label{fig:synth-NN-batch-size}
    \end{subfigure}
    \caption{%
        Different performance metrics for different optimizers for the
        minimization in Section~\ref{sec:synth-NN}.
        The legend in Figure~\ref{fig:synth-NN-eg} is shared with Figures~\ref{fig:synth-NN-updates} and~\ref{fig:synth-NN-batch-size}, and the ``ERM test loss''
        is the test loss of the linear ERM solution.
        ``Epochs'' refers to ``$n$ training examples
        have been processed.'' The solid lines represent the mean over 50 runs,
        and the shaded region represent the interquartile range.
    }\label{fig:synth-NN}
\end{figure*}

First, let's train a neural network with linear activations to illustrate our
theoretical contributions. Practically speaking, this is an extremely
inefficient and roundabout way to compute a linear function. However, the
associated loss function is non-convex and more difficult to optimize.  Despite
the non-convexity it satisfies the PL inequality almost everywhere in a
measure--theoretic sense~\cite[Thm.~13]{charles18stability}.  This section will focus on this optimization:
\begin{equation}
    \widehat{\bm{w}}_1,\widehat{\bm{W}}_2, \widehat{\bm{W}}_3  = \arg\min_{\bm{w}_1, \bm{W}_2,\bm{W}_3}
    \sum_{i=1}^n \parens{y_i - \bm{w}_1^T\bm{W}_{2}\bm{W}_3 \xb_i}^2
\label{eq:eg}
\end{equation}
where there are \(n=10^4\) observations and each feature vector has
\(d=100\) dimensions, and \(\bm{w}_1 \in \R^{d} \),
\(\bm{W}_2,\bm{W}_3\in\R^{d, d}\). The synthetic data $\xb_i$ is generated with
coordinates drawn independently from $\mathcal{N}(0,
1)$. Each label $y_i$ is given by $y_i = \bm{x}_i^T \bm{w}^\star + n_i$ where
  $n_i \sim \mathcal{N}(0, \sfrac{d}{100})$ and $\bm{w}^\star_i \sim \mathcal{N}(0, 1)$.
  Of the $n = 10^4$ observations,
  \(2,000\) observations are used as test data.

In order to understand our adaptive batch size method, we compare the model
updates in Theorem~\ref{thm:conv-smooth-PL} (aka ``Adaptive Batch SGD'') with
mini-batch SGD to standard mini-batch SGD with decaying step size (SGD),
gradient descent and Adagrad. The hyperparameters for these optimizers are not
tuned and details are in Appendix~\ref{app:tune}. Adagrad and SGD are run
with batch size $B = 64$.

Figure~\ref{fig:synth-NN} shows that Adaptive Batch SGD requires far fewer
model updates, not far from the number that gradient descent requires.
Adaptive Batch SGD and SGD require nearly the same number of data, with Adagrad
requiring more data than SGD but far less than gradient descent.
Figure~\ref{fig:synth-NN-batch-size} shows that the batch size grows nearly
exponentially, which unsurprising given \citet[Eq. 5.7]{bottou2018opt}.

\subsection{Functional implementation}\label{sec:fmnist}
\begin{wrapfigure}{R}{0.38\textwidth}
\begin{minipage}{0.38\textwidth}
\begin{algorithm}[H]
    \caption{\RadaDamp(step size $\gamma$,
        memory $\rho = 0.999$,
        initial batch size $B_0$,
        maximum batch size $B_{\max}$,
        initial model $\wb_0$,
        regularization $\lambda = 10^{-3})$}\label{alg:radadamp},
    \begin{algorithmic}[1]
        \FOR{$k\in [0, 1, 2, \ldots]$}
            \STATE $\gamma' \assign \gamma$
            \IF{$B_{k} \ge B_{\max}$}
                \STATE $\gamma' \assign \gamma B_{\max} / B_{k}$
                \STATE $B_{k} \assign B_{\max}$
            \ENDIF
            \STATE $\est{L}_B \assign \sfrac{1}{B_k}\sum_{i=1}^{B_k} f_{i_s}(\wb_k)$
            \STATE $\wb_{k+1} \assign \wb_k - \gamma'\grad \est{L}_B$

            \STATE $t_k \assign \est{L}_B + \lambda\norm{\grad \est{L}_B}_2^2$
            \IF{$k > 0$}
                \STATE $\est{d}_k \assign \rho \cdot\est{d}_{k-1} + (1 - \rho) t_k$
            \ELSE
                \STATE $\est{d}_0 \assign t_k$
            \ENDIF
            \STATE $B_{k+1} \assign \ceil*{B_0 \est{d}_{0}/\est{d}_k}$
        \ENDFOR
        \RETURN{$\wb_{k+1}$}
    \end{algorithmic}
\end{algorithm}
\end{minipage}
\end{wrapfigure}

A practical issue immediately presents itself: the computation of the batch
size $B_k$.  This is clearly infeasible because it requires evaluating the
entire training dataset every model update. To work around this issue, let's
approximate the training loss with a rolling-average of batch losses.
Additionally, generalization\footnote{There are concerns with large static
batch sizes~\cite{smith2017b, jastrzebski2017}; it's unclear what happens for
\emph{variable} batch sizes.} and GPU memory concerns may be present. To
address these concerns, prior work sets a maximum batch size and decays the
learning rate by the same amount the batch size would have
increased~\cite{smith2017,adabatch}.
Both actions reduces the ``noise scale'' or variance of the model update, and
the results in \citet{smith2017} ``suggest that it is the noise scale which is
relevant, not the learning rate.'' This additional noise decay might help with
generalization escape ``sharp minima'' that generalize poorly~\cite{smith2017b,
keskar2016, chaudhari2016}

The implementation of this algorithm is shown shown in
Algorithm~\ref{alg:radadamp}, which uses a \emph{r}olling average to
\emph{ada}ptively \emph{damp} the noise in the gradient estimate.  This
algorithm is designed with these experiments in mind, the reason the batch size
is inversely proportional to a linear combination of the training loss and
gradient norm.

\begin{figure*}[bt]
    \centering
    \begin{subfigure}[b]{0.30\textwidth}
        \centering
        \includegraphics[height=0.18\textheight]{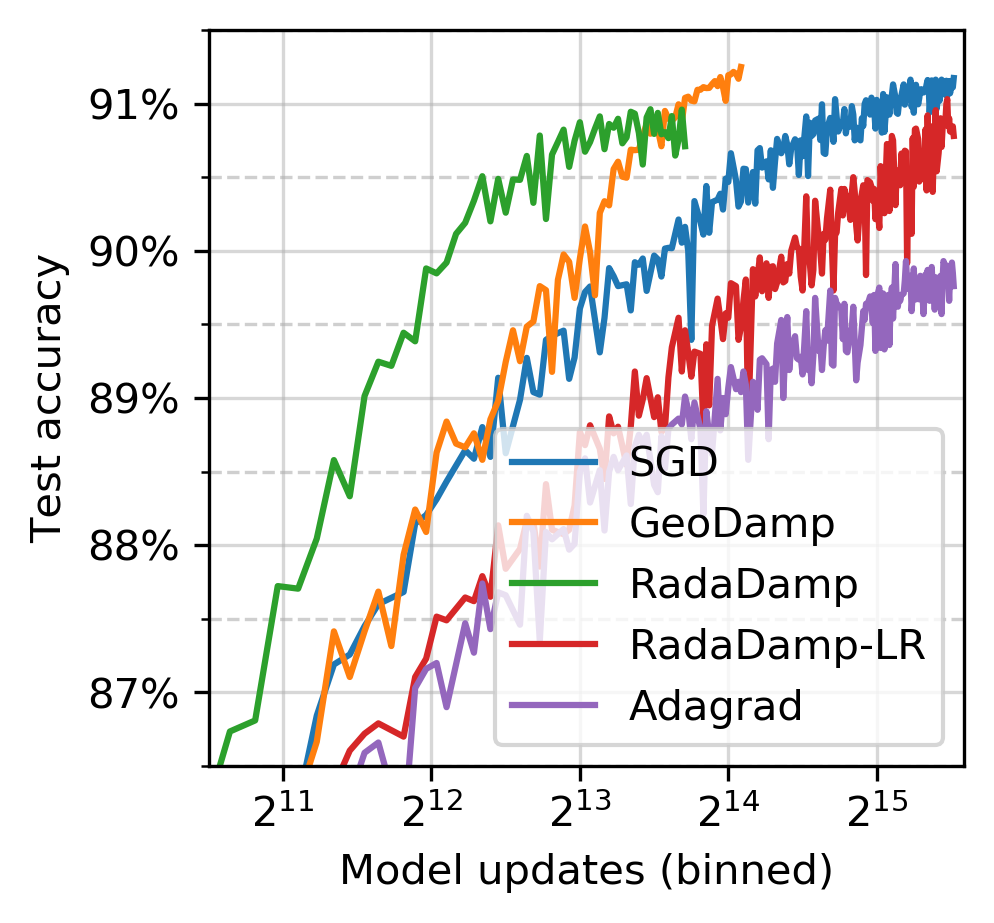}
        \caption{The number of model updates vs. test accuracy.\\\\}
        \label{fig:fashionmnist-test-updates}
    \end{subfigure}
    ~
    \begin{subfigure}[b]{0.30\textwidth}
        \centering
        \includegraphics[height=0.18\textheight]{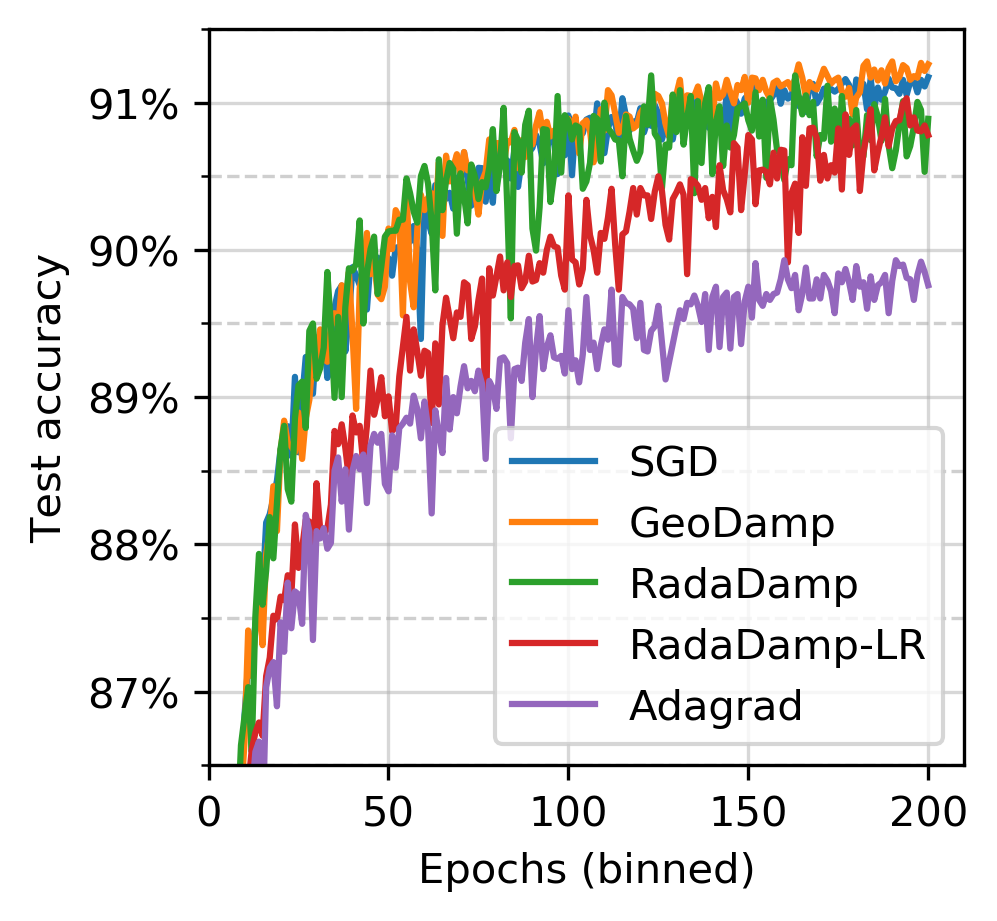}
        \caption{The number of epochs vs. test accuracy.\\\\}
        \label{fig:fashionmnist-train-epochs}
    \end{subfigure}
    ~
    \begin{subfigure}[b]{0.30\textwidth}
        \centering
        \includegraphics[height=0.18\textheight]{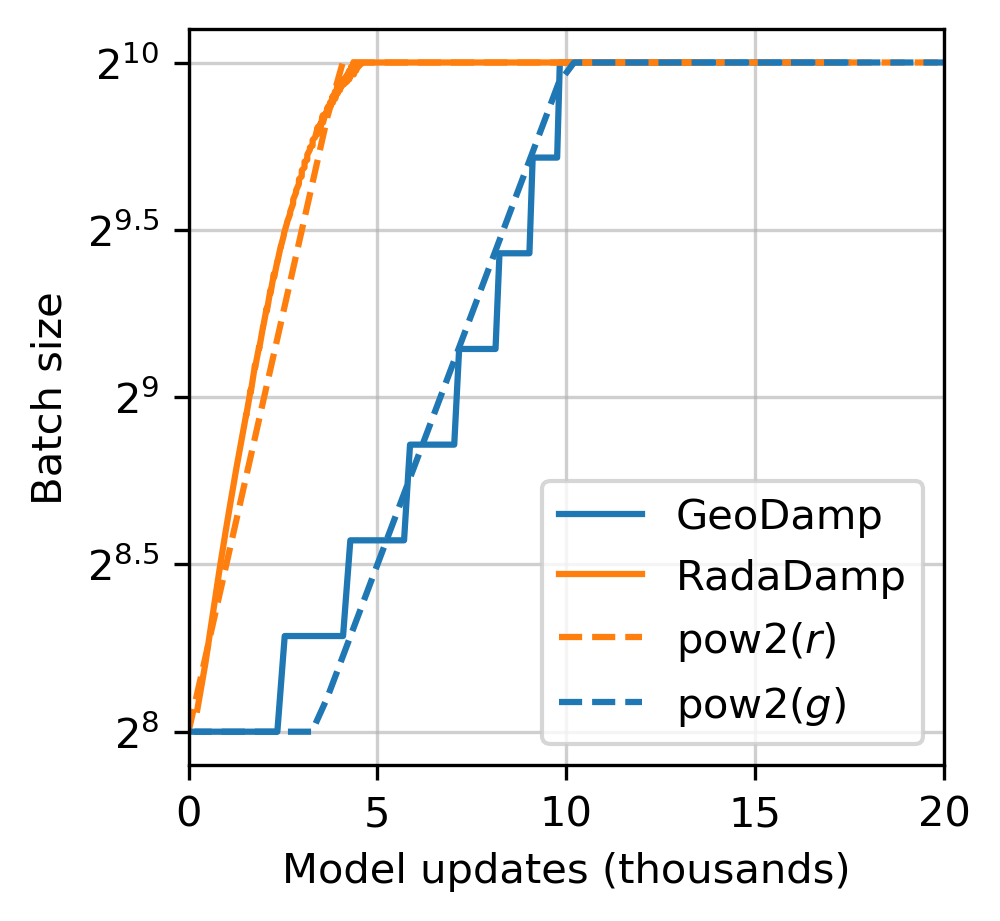}
        \caption{The batch size against the number of model updates $u$. Here, $r = 0.5u/10^3 + 8$ and \\$g = 0.3u/10^3 + 7$.}
        \label{fig:fashionmnist-batch-size}
    \end{subfigure}
    \caption{%
        Performance on the Fashion-MNIST dataset.
        ``Binned epochs'' means
        ``rounded to nearest integer'' and ``binned model updates'' means
        ``rounded to nearest multiple of 200.'' The mean of these grouped
        values is used over the same two random seeds.
    }\label{fig:exp-fashion-mnist}
\end{figure*}

To evaluate our method, let's use a convolutional neural network on the
Fashion-MNIST dataset~\cite{fmnist} with optimization algorithms that either
passively or adaptively change the learning rate or batch size. Specifically,
let's compare \RadaDamp~with SGD, ``GeoDamp''~\cite{smith2017}, and
AdaGrad~\cite{duchi2011a}.\footnote{All optimizers use the same learning rate,
momentum and initial/max batch size, and basic tuning on the batch size
increase/learning rate decay schedule is performed} During this, let's
tune the batch size increase schedule for \RadaDamp/GeoDamp, and use the same
schedule for the corresponding algorithms that only decay the learning rate
(``\RadaDamp-LR'' and SGD respectively). Details are in
Appendix~\ref{app:tune}.

\begin{wrapfigure}{r}{0.30\textwidth}
\begin{minipage}{0.30\textwidth}
\begin{tabular}{r|l}
                          & Final test \\
    Optimizer             & accuracy \\
    \midrule
    \textbf{GeoDamp}      & 91.21\% \\
    \textbf{SGD}          & 91.11\% \\
    \textbf{\RadaDamp-LR} & 90.87\% \\
    \textbf{\RadaDamp}    & 90.79\% \\
    \textbf{Adagrad}      & 89.83\% \\
\end{tabular}
    \caption{The mean test accuracy of the last 10 epochs.}
\end{minipage}
\end{wrapfigure}

Our experimental results are shown in Figure~\ref{fig:exp-fashion-mnist},
details of which are in Appendix~\ref{app:tune}.  As expected, they show that
\RadaDamp~and GeoDamp require far fewer model updates than \RadaDamp-LR and
SGD, and similar performance is obtained for all methods in terms of
epochs.\footnote{With the exception of Adagrad.} If the ``noise scale'' of the
model updates is relevant as Smith et al.  hypothesize~\cite{smith2017}, then
perhaps the relevant comparison is between passive and adaptive methods of
changing the ``noise scale'' (i.e., \RadaDamp~is to Adagrad as GeoDamp is to
SGD).

\RadaDamp~requires far fewer model updates than GeoDamp to reach any test
accuracy \RadaDamp~obtains (though GeoDamp obtains a final test accuracy that
is approximately 0.4\% higher).  Of course, Both Adagrad and \RadaDamp~require
far less tuning than GeoDamp and SGD because of the adaptivity to the
(estimated) training loss.

Figure~\ref{fig:fashionmnist-batch-size} shows that both \RadaDamp~and GeoDamp
(approximately) increase the batch size exponentially as functions of model
updates, at least initially.  However GeoDamp's learning rate decays much more and far quicker than \RadaDamp's (perhaps a reason for GeoDamp's increased performance).

\section{Conclusion \& Future work}\label{conclusion-future-work}

This work presents a method to have the batch size depend on the model training
loss, and provides convergence results. However, this method requires
significant computation. This complexity is mitigated by the presentation of a
approximation to the adaptive method.  Experimental results validate the
theoretical results.

Future work involves studying why GeoDamp outperforms \RadaDamp. This will
likely motivate the design of a method similar to \RadaDamp~that might
incorporate second-order information~\cite{zeiler2012} and/or line
searches~\cite{vaswani2019, de2016big, automated2016de}.

\newpage

\bibliography{src/refs}{}

\newpage

\onecolumn

\appendix

\section{Gradient diversity bounds}\label{convergence-and-gradient-diversity-bounds}
\label{gradient-diversity-bounds}
\label{sec:grad-div}

Yin et al. introduced a measure of gradient dissimilarity called ``gradient diversity''~\cite{yin2017gradient}:
\begin{defn}
    The gradient diversity of a model $\wb$ with respect to $F$ is given by
\begin{equation}
\Delta(\wb) :=
\frac{\sum_{i=1}^n \norm{\grad f_i(\wb)}_2^2}{\norm{\sum_{i=1}^n \grad f_i(\wb)}_2^2} =
    \frac{\sum_{i=1}^n \norm{\grad f_i(\wb)}_2^2}{\sum_{i=1}^n \norm{\grad f_i(\wb)}_2^2 + \sum_{i\not= j}\inner{f_i(\wb), f_j(\wb)}}.
\label{eq:grad-div}
\end{equation}
    when $f_i(\wb) = f(\wb; \xb_i)$.
Let $\Delta_k := \Delta(\wb_k)$ given iterates
$\{\wb_i\}_{i=1}^T$.
\end{defn}
When the gradients are orthogonal, then $\Delta_k = 1$ and
when all the gradients are exactly the same, then $\Delta_k = \sfrac{1}{n}$.

Yin et al. show that serial SGD and mini-batch SGD produce similar results
with the same number of gradient evaluations~\cite[Theorem 3]{yin2017gradient}.
In this result, the batch size must obey a bound proportional to the maximum
gradient diversity over \emph{all} iterates. Let's see how gradient diversity
changes as an optimization proceeds:
\begin{thm}\label{thm:lower}%
If $F$ is $\beta$-smooth, the gradient diversity $\Delta_k$ obeys
$\Delta_k \ge c / \norm{\wb_k - \wb^\star}^2_2$
for $c = M^2_L / \beta^2 n$.
\end{thm}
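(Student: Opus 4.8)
The plan is to rewrite $\Delta_k$ in terms of the full-dataset gradient $\grad F(\wb_k)$ and the average squared per-example gradient norm $M^2(\wb_k)$, and then bound numerator and denominator separately. First I would use $F = \frac{1}{n}\sum_{i=1}^n f_i$, so that $\sum_{i=1}^n \grad f_i(\wb_k) = n\,\grad F(\wb_k)$; hence the denominator in Eq.~\ref{eq:grad-div} equals $\norm{\sum_{i=1}^n \grad f_i(\wb_k)}_2^2 = n^2\norm{\grad F(\wb_k)}_2^2$. The numerator is, by the definition of $M^2(\wb_k)$, exactly $\sum_{i=1}^n\norm{\grad f_i(\wb_k)}_2^2 = n\,M^2(\wb_k)$. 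Therefore
$$\Delta_k = \frac{M^2(\wb_k)}{n\,\norm{\grad F(\wb_k)}_2^2}.$$

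Next I would lower-bound the numerator using $M^2(\wb_k) \ge M^2_L$, which holds by definition of $M^2_L$ as the minimum of $M^2(\cdot)$ over all iterates. For the denominator I would invoke $\beta$-smoothness: the gradient $\grad F$ is $\beta$-Lipschitz, and since $\wb^\star$ is a minimizer (hence stationary point) of $F$ we have $\grad F(\wb^\star) = 0$, so
$$\norm{\grad F(\wb_k)}_2 = \norm{\grad F(\wb_k) - \grad F(\wb^\star)}_2 \le \beta\,\norm{\wb_k - \wb^\star}_2,$$
and thus $\norm{\grad F(\wb_k)}_2^2 \le \beta^2\norm{\wb_k - \wb^\star}_2^2$. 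Combining the two bounds gives
$$\Delta_k \ge \frac{M^2_L}{n\,\beta^2\,\norm{\wb_k - \wb^\star}_2^2} = \frac{c}{\norm{\wb_k - \wb^\star}_2^2}, \qquad c = \frac{M^2_L}{\beta^2 n},$$
as claimed.

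There is no serious obstacle here; the argument is a two-line manipulation once one recognizes that $\sum_i \grad f_i = n\grad F$ and that smoothness controls $\norm{\grad F(\wb_k)}$ via the distance to $\wb^\star$. The only points requiring care are the implicit assumptions: that $\wb^\star$ is a stationary point so that $\grad F(\wb^\star)=0$ (automatic for a minimizer), and that $M^2_L > 0$ so that $c > 0$ and the bound is non-vacuous — equivalently, that not every iterate is a stationary point. I would also remark that the bound is loose, since it replaces $M^2(\wb_k)$ by its global minimum $M^2_L$; this looseness is harmless for the intended use, namely showing $\Delta_k$ grows as the iterates approach $\wb^\star$, which motivates the inverse-loss batch size in Eq.~\ref{eq:batch-size-loss}.
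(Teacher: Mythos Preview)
Your proposal is correct and follows essentially the same route as the paper's proof: rewrite $\Delta_k = M^2(\wb_k)/(n\norm{\grad F(\wb_k)}_2^2)$ using $\sum_i \grad f_i = n\grad F$, then invoke $\grad F(\wb^\star)=0$ and $\beta$-smoothness to bound the denominator by $n\beta^2\norm{\wb_k-\wb^\star}_2^2$, and finally replace $M^2(\wb_k)$ by $M_L^2$. The paper does exactly this, in the same order.
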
%
\begin{thm}\label{thm:upper}%
If $F$ is $\alpha$-strongly convex, the gradient diversity $\Delta_k$ obeys
$\Delta_k \le c / \norm{\wb_k - \wb^\star}^2_2$
for $c = M^2_U / \alpha^2 n$.
\end{thm}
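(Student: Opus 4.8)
The plan is to rewrite the gradient diversity $\Delta_k$ entirely in terms of $\grad F$, and then bound its numerator and denominator separately. First I would use $F = \frac1n\sum_i f_i$, so that the denominator in Eq.~\ref{eq:grad-div} is $\norm{\sum_{i=1}^n \grad f_i(\wb_k)}_2^2 = n^2\norm{\grad F(\wb_k)}_2^2$, while the numerator is $\sum_{i=1}^n \norm{\grad f_i(\wb_k)}_2^2 = n\,M^2(\wb_k) \le n\,M^2_U$ by the definition of $M^2_U$. Combining these two observations gives $\Delta_k \le M^2_U / (n\,\norm{\grad F(\wb_k)}_2^2)$, so the whole statement reduces to establishing the lower bound $\norm{\grad F(\wb_k)}_2^2 \ge \alpha^2\norm{\wb_k - \wb^\star}_2^2$.

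To obtain that lower bound I would invoke the standard consequence of $\alpha$-strong convexity that the gradient map is strongly monotone: applying Definition~\ref{def:cnvx} with $(\wb_1,\wb_2) = (\wb^\star,\wb_k)$ and with $(\wb_1,\wb_2) = (\wb_k,\wb^\star)$ and adding the two inequalities yields $\inner{\grad F(\wb_k) - \grad F(\wb^\star), \wb_k - \wb^\star} \ge \alpha\norm{\wb_k - \wb^\star}_2^2$. Since $\wb^\star$ minimizes the differentiable function $F$ we have $\grad F(\wb^\star) = 0$, hence $\inner{\grad F(\wb_k), \wb_k - \wb^\star} \ge \alpha\norm{\wb_k - \wb^\star}_2^2$, and Cauchy--Schwarz gives $\norm{\grad F(\wb_k)}_2 \ge \alpha\norm{\wb_k - \wb^\star}_2$ whenever $\wb_k \neq \wb^\star$. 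Squaring and substituting into the display from the previous paragraph gives $\Delta_k \le M^2_U / (\alpha^2 n\,\norm{\wb_k - \wb^\star}_2^2) = c/\norm{\wb_k - \wb^\star}_2^2$ with $c = M^2_U/(\alpha^2 n)$, as claimed.

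I do not expect a genuine obstacle: the argument is two short algebraic reductions plus a textbook property of strongly convex functions. The only points requiring mild care are the degenerate case $\wb_k = \wb^\star$ (where the asserted bound is vacuous and $\Delta_k = \sfrac1n$ is attained trivially) and noting that $F$ is differentiable — its gradients appear in the statement — so that $\grad F(\wb^\star) = 0$ is legitimate. The companion bound (Theorem~\ref{thm:lower}) follows the same template with the inequalities reversed: replace strong convexity by $\beta$-smoothness to get $\norm{\grad F(\wb_k)}_2 \le \beta\norm{\wb_k - \wb^\star}_2$ (Lipschitz gradient, again using $\grad F(\wb^\star) = 0$), and replace $M^2_U$ by $M^2_L$ in the numerator.
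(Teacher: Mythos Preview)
Your proposal is correct. The only difference from the paper's proof is in how you establish the key inequality $\norm{\grad F(\wb_k)}_2^2 \ge \alpha^2\norm{\wb_k - \wb^\star}_2^2$. You obtain it in one step via strong monotonicity of the gradient together with Cauchy--Schwarz. The paper instead chains two inequalities: first it invokes Lemma~\ref{lemma:sc-pl} (strong convexity $\Rightarrow$ PL) to get $\norm{\grad F(\wb_k)}_2^2 \ge 2\alpha\bigl(F(\wb_k)-F^\star\bigr)$, and then uses the quadratic-growth bound $F(\wb_k)-F^\star \ge \tfrac{\alpha}{2}\norm{\wb_k-\wb^\star}_2^2$. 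Both routes land on the identical constant $\alpha^2$. Your argument is a bit more direct; the paper's has the side benefit that the intermediate inequality $\Delta_k \le M_U^2/\bigl(2\alpha n\,(F(\wb_k)-F^\star)\bigr)$ is exactly Corollary~\ref{cor:upper}, so that result falls out for free along the way.
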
%
\begin{cor}\label{cor:upper}%
If $F$ is $\alpha$-PL, then the gradient
diversity $\Delta_k$ obeys
$\Delta_k \le c / (F(\wb_k) - F^\star)$
for $c = M^2_U/2\alpha n$.
\end{cor}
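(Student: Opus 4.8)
The plan is to unwind the definition of gradient diversity in terms of the full gradient $\grad F$ and then apply the PL inequality directly. First I would rewrite the denominator: since $\grad F(\wb) = \tfrac1n\sum_{i=1}^n \grad f_i(\wb)$, we have $\norm{\sum_{i=1}^n \grad f_i(\wb_k)}_2^2 = n^2\norm{\grad F(\wb_k)}_2^2$, while the numerator of $\Delta_k$ is exactly $n\,M^2(\wb_k)$ by the definition of $M^2$. Hence
$$\Delta_k = \frac{M^2(\wb_k)}{n\,\norm{\grad F(\wb_k)}_2^2}.$$

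Next I would invoke the $\alpha$-PL condition $\tfrac12\norm{\grad F(\wb_k)}_2^2 \ge \alpha(F(\wb_k) - F^\star)$ to replace the gradient norm in the denominator, which gives $\Delta_k \le \tfrac{M^2(\wb_k)}{2\alpha n(F(\wb_k) - F^\star)}$, and finally bound $M^2(\wb_k) \le M^2_U$ to obtain the stated constant $c = M^2_U/(2\alpha n)$. One should note that $F(\wb_k) > F^\star$ is needed for the ratio to be finite, which is consistent with the convention that gradient diversity is only evaluated at non-stationary iterates (at a stationary point the PL condition forces $F(\wb_k) = F^\star$, so the bound is vacuous there anyway).

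There is no real obstacle here: the argument is essentially a one-line substitution once the numerator and denominator of $\Delta_k$ are re-expressed through $F$. The only things to watch are the bookkeeping of the factors of $n$ (two in the denominator, one in the numerator, netting a single $n$) and the factor of $2$ coming from the PL constant. I would also remark that this corollary is the PL analogue of Theorem~\ref{thm:upper}: for $\alpha$-strongly convex $F$ one can instead use $\tfrac\alpha2\norm{\wb_k - \wb^\star}_2^2 \le F(\wb_k) - F^\star$ to pass between the two forms, but since $\alpha$-PL is strictly weaker we apply the PL inequality itself rather than routing through $\norm{\wb_k - \wb^\star}$.
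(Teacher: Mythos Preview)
Your proposal is correct and follows essentially the same approach as the paper: the paper's proof of Theorem~\ref{thm:upper} rewrites $\Delta_k = M^2(\wb_k)/(n\norm{\grad F(\wb_k)}_2^2)$, applies the PL inequality to obtain $\Delta_k \le M^2(\wb_k)/(2\alpha n(F(\wb_k)-F^\star))$, and then bounds $M^2(\wb_k)\le M_U^2$; the corollary is exactly this intermediate inequality before the additional strong-convexity step. Your bookkeeping of the $n$ factors and the PL constant $2$ matches the paper's computation.
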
%

Straightforward proofs of the above are given in
Appendix~\ref{app:proof-lower}~and~\ref{app:proof-upper}.  These proofs will
rely on

\begin{lemma}
If a function $f$ is $\lambda$-strongly convex, then $f$ is also $\lambda$-PL.
\label{lemma:sc-pl}
\end{lemma}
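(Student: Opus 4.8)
The plan is to use the strong convexity inequality directly and exploit the fact that it holds for \emph{every} pair of points by minimizing its right-hand side. Fix an arbitrary $\wb$ and apply Definition~\ref{def:cnvx} with the second argument set to $\wb$: for all $\wb_1$,
$$f(\wb_1) \ge f(\wb) + \inner{\grad f(\wb), \wb_1 - \wb} + \frac{\lambda}{2}\norm{\wb_1 - \wb}_2^2.$$
First I would minimize the right-hand side over $\wb_1$. It is a strictly convex quadratic in $\wb_1$; setting its gradient to zero gives $\wb_1 - \wb = -\lambda^{-1}\grad f(\wb)$, and substituting back yields the minimum value $f(\wb) - \frac{1}{2\lambda}\norm{\grad f(\wb)}_2^2$.

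Next, since the displayed inequality holds for \emph{every} $\wb_1$, it holds in particular at a minimizer $\wb^\star$ of $f$, so
$$f^\star = f(\wb^\star) \ge f(\wb) + \inner{\grad f(\wb), \wb^\star - \wb} + \frac{\lambda}{2}\norm{\wb^\star - \wb}_2^2 \ge f(\wb) - \frac{1}{2\lambda}\norm{\grad f(\wb)}_2^2,$$
where the last step uses that the middle expression is at least the minimum computed above. Rearranging gives $\frac{1}{2}\norm{\grad f(\wb)}_2^2 \ge \lambda\,(f(\wb) - f^\star)$, which is precisely the $\lambda$-PL condition; since $\wb$ was arbitrary, this establishes the lemma.

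There is essentially no obstacle here: the only mild subtlety is recognizing that the per-point bound $f(\wb_1)\ge \mathrm{RHS}(\wb_1)$ should be chained as $f^\star = f(\wb^\star) \ge \mathrm{RHS}(\wb^\star) \ge \min_{\wb_1}\mathrm{RHS}(\wb_1)$, rather than attempting to compare $\min_{\wb_1} f(\wb_1)$ and $\min_{\wb_1}\mathrm{RHS}(\wb_1)$ by some other route. (Alternatively, one may note that a $\lambda$-strongly convex $f$ has a unique minimizer and simply plug that point in for $\wb_1$.) This is the classical implication from~\citet{karimi2016linear}, included here for completeness.
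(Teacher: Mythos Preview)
Your proof is correct and follows essentially the same approach as the paper: both start from the strong convexity inequality at an arbitrary point, minimize the quadratic right-hand side to obtain the value $f(\wb)-\tfrac{1}{2\lambda}\norm{\grad f(\wb)}_2^2$, and chain through evaluation at $\wb^\star$ to conclude the PL inequality. Your explicit remark about chaining $f(\wb^\star)\ge\mathrm{RHS}(\wb^\star)\ge\min_{\wb_1}\mathrm{RHS}(\wb_1)$ is exactly the step the paper writes as $f(x^\star)-f(x)\ge g(x^\star)\ge\min_y g(y)$.
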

and
\begin{cor}[from Lemma 1 on \cite{yin2017gradient}]
Let $\wb_{k}$ be a model after $k$ updates. Let $\wb_{k+1}$ be the model after a
mini-batch iteration given by Equation~\ref{eq:mini-batch} with batch size
$B_k \le n \delta \Delta_k + 1$ for an arbitrary $\delta$. Then,
$$\E{\norm{\wb_{k+1} - \wb^\star}^2_2\given \wb_{k}} \le \norm{\wb_{k} - \wb^\star}_2^2 - 2\gamma_k\inner{\grad F(\wb_{k}), \wb_{k} - \wb^\star} +
\frac{(1 + \delta)\gamma^2 M^2(\wb_{k})}{B_k}$$
with equality when there are no projections.
\label{thm:base}
\end{cor}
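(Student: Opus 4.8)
The plan is to run the textbook one-step analysis of projected mini-batch SGD; the only ingredient beyond routine algebra is a second-moment bound on the mini-batch gradient $\vecg_k := \tfrac{1}{B_k}\sum_{j=1}^{B_k}\grad f_{i_j}(\wb_k)$, where the indices $i_j$ are drawn independently and uniformly from $\{1,\dots,n\}$, and it is exactly there that the hypothesis $B_k\le n\delta\Delta_k+1$ is used, through the gradient diversity $\Delta_k$.

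First I would write $\wb_{k+1}$ as the Euclidean projection onto the feasible set of $\wb_k-\gamma_k\vecg_k$. Since projection onto a convex set is nonexpansive and $\wb^\star$ is feasible, $\norm{\wb_{k+1}-\wb^\star}_2^2 \le \norm{\wb_k-\gamma_k\vecg_k-\wb^\star}_2^2$, with equality when no projection is performed; expanding the right-hand side gives $\norm{\wb_k-\wb^\star}_2^2 - 2\gamma_k\inner{\vecg_k,\wb_k-\wb^\star} + \gamma_k^2\norm{\vecg_k}_2^2$. Taking $\E{\cdot\given\wb_k}$ and using unbiasedness $\E{\vecg_k\given\wb_k}=\grad F(\wb_k)$ (each $i_j$ is uniform) collapses the cross term, leaving $\E{\norm{\wb_{k+1}-\wb^\star}_2^2\given\wb_k}\le \norm{\wb_k-\wb^\star}_2^2 - 2\gamma_k\inner{\grad F(\wb_k),\wb_k-\wb^\star} + \gamma_k^2\,\E{\norm{\vecg_k}_2^2\given\wb_k}$.

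The heart of the argument is to control $\E{\norm{\vecg_k}_2^2\given\wb_k}$. Expanding the squared norm over the independent samples, the $B_k$ diagonal terms give $B_k^{-2}\sum_{j}\E{\norm{\grad f_{i_j}(\wb_k)}_2^2\given\wb_k}=M^2(\wb_k)/B_k$, while each of the $B_k(B_k-1)$ off-diagonal terms factors by independence into $\norm{\grad F(\wb_k)}_2^2$; hence $\E{\norm{\vecg_k}_2^2\given\wb_k} = M^2(\wb_k)/B_k + \tfrac{B_k-1}{B_k}\norm{\grad F(\wb_k)}_2^2$. Next I would rewrite the full gradient through the diversity definition, $\norm{\grad F(\wb_k)}_2^2 = \tfrac{1}{n^2}\norm{\sum_{i=1}^n\grad f_i(\wb_k)}_2^2 = \tfrac{1}{n^2}\bigl(\sum_{i=1}^n\norm{\grad f_i(\wb_k)}_2^2\bigr)/\Delta_k = M^2(\wb_k)/(n\Delta_k)$, so the bound becomes $\tfrac{M^2(\wb_k)}{B_k}\bigl(1+\tfrac{B_k-1}{n\Delta_k}\bigr)$. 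Finally $B_k\le n\delta\Delta_k+1$ gives $(B_k-1)/(n\Delta_k)\le\delta$, hence $\E{\norm{\vecg_k}_2^2\given\wb_k}\le(1+\delta)M^2(\wb_k)/B_k$; substituting this into the previous display yields the claim (which is, before this last bound is applied, an equality whenever no projection is performed).

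I expect the only step needing real care --- everything else being the standard SGD descent lemma --- to be this second-moment computation: the clean factor $(B_k-1)/(n\Delta_k)$ hinges on sampling with replacement so that the off-diagonal expectations factor, a without-replacement draw would require a slightly different (in fact tighter) variance identity, and one must keep the conditioning on $\wb_k$ explicit so that $M^2(\wb_k)$ and $\Delta_k$ act as constants under $\E{\cdot\given\wb_k}$.
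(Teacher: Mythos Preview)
Your argument is correct and complete: the nonexpansiveness of projection, the unbiasedness of $\vecg_k$, the exact second-moment identity $\E{\norm{\vecg_k}_2^2\given\wb_k}=M^2(\wb_k)/B_k+\tfrac{B_k-1}{B_k}\norm{\grad F(\wb_k)}_2^2$, and the rewriting $\norm{\grad F(\wb_k)}_2^2=M^2(\wb_k)/(n\Delta_k)$ together with $B_k\le n\delta\Delta_k+1$ give precisely the stated bound, with equality in the unprojected case up to the final inequality.

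As for comparison: the paper does not supply its own proof of this corollary; it is simply quoted from Lemma~1 of \citet{yin2017gradient} (the line ``Proof is in Appendix~\ref{app:pl}'' in the paper refers to the adjacent strongly-convex-implies-PL lemma, not to this corollary). Your derivation is exactly the standard one underlying that cited result, so there is nothing to contrast --- you have reconstructed the intended proof.
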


\noindent
Proof is in Appendix~\ref{app:pl}.

\subsection{\texorpdfstring{Proof of Theorem~\ref{thm:lower}}{Proof of Theorem }}\label{proof-of-theorem}

\label{app:proof-lower}

\begin{proof}
First, let's expand the gradient diversity term and exploit that
$\grad F(\wb^\star) = 0$ when $\wb^\star$ is a local minimizer or saddle point:

$$\Align{
\Delta_k &= \frac{\sum_i \norm{\grad f_i(\wb_k)}_2^2}{\norm{\sum_i\grad f_i(\wb_k)}_2^2}\\
&=\frac{\sum_i \norm{\grad f_i(\wb_k)}_2^2}{\norm{n\grad F(\wb_k)}_2^2}\\
&=\frac{\frac{1}{n}\sum_i \norm{\grad f_i(\wb_k)}_2^2}{n\norm{\grad F(\wb_k) - \grad F(\wb^\star)}_2^2}\\
}$$

\noindent
Because $F$ is $\beta$-smooth, $\norm{\grad F(\wb_1) - \grad F(\wb_2)} \le \beta\norm{\wb_1 - \wb_2}$. Then,

$$\Align{
\Delta_k&=\frac{M^2(\wb_k)}{n\norm{\grad F(\wb_k) - \grad F(\wb^\star)}_2^2}
&\ge\frac{M^2(\wb_k)}{n\beta^2\norm{\wb_k - \wb^\star}_2^2}
&\ge\frac{M_L^2}{n\beta^2\norm{\wb_k - \wb^\star}_2^2}
}$$
\end{proof}

\noindent

\subsection{\texorpdfstring{Proof of Theorem~\ref{thm:upper}}{Proof of Theorem }}\label{proof-of-theorem-1}

\label{app:proof-upper}

\begin{proof}
Now, define expand gradient diversity and take advantage that $\grad
F(x^\star) = 0$ when $x^\star$ is a local minima or saddle point:

$$\Align{
\Delta_k &= \frac{\sum_i \norm{\grad f_i(\wb_k)}_2^2}{\norm{\sum_i\grad f_i(\wb_k)}_2^2}\\
&= \frac{\frac{1}{n}\sum_i \norm{\grad f_i(\wb_k)}_2^2}{n\norm{\grad F(\wb_k)}_2^2}\\
&= \frac{M^2(\wb_k)}{n\norm{\grad F(\wb_k)}_2^2}\\
&\le \frac{M^2(\wb_k)}{2\alpha n \parens{F(\wb_k) - F(\wb^\star)}}\\
}$$

\noindent
In the context of Theorem~\ref{thm:upper}, the function $F$ is assumed to
be $\alpha$-strongly convex. This implies that the function $F$ is also
$\alpha$-PL as shown in Lemma~\ref{lemma:sc-pl}. With this, the fact that
strongly convex functions grow at least quadratically can be used, so

$$\Align{
\frac{M^2(\wb_k)}{2\alpha n \parens{F(\wb_k) - F(\wb^\star)}}
&\le \frac{M^2(\wb_k)}{\alpha^2 n\norm{\wb_k - \wb^\star}_2^2}\\
}$$

\noindent
Then, by definition of $M^2$ and $M_U^2$, there's also

$$\Align{
\Delta_k&\le\frac{M_U^2}{2\alpha n \parens{F(\wb_k) - F(\wb^\star)}}
&\le \frac{M_U^2}{\alpha^2 n\norm{\wb_k - \wb^\star}_2^2}\\
}$$
\end{proof}

\subsection{\texorpdfstring{Proof of Lemma~\ref{lemma:sc-pl}}{Proof of Lemma }}\label{proof-of-lemma}

\label{app:pl}

There is a brief proof of this in Appendix B of \cite{karimi2016linear}.
It is expanded here for completeness.

\begin{proof}
Recall that $\lambda$-strongly convex means $\forall x, y$

$$f(y) \ge f(x) + \grad f(x)^T (y-x) + \frac{\lambda}{2}\norm{y - x}^2_2$$

\noindent
and $\lambda$-PL means that $\frac{1}{2} \norm{\grad f(x)}_2^2 \ge \lambda (f(x) - f(x^\star))$.

Let's start off with the definition of strong convexity, and define
$g(y) = \grad f(x)^T (y - x) + \frac{\lambda}{2}\norm{y - x}^2_2$. Then, it's
simple to see that

$$\Align{
f(x^\star) - f(x)&\ge \grad f(x)^T (x^\star-x) + \frac{\lambda}{2}\norm{x^\star - x}^2_2\\
&\ge  \min_y g(y)
}$$

\noindent
$g$ is a convex function, so the minimum can be obtained by setting $\grad g(y) = 0$.
When the minimum of $g(y)$ is found, $y = x - \frac{1}{\lambda}\grad f(x)$.
That means that

$$\Align{
\min_y g(y) &= g(x - \lambda^{-1} \grad f(x))\\
&= \frac{-1}{\lambda}\norm{f(x)}^2_2 + \frac{1}{2\lambda}\norm{\grad f(x)}_2^2\\
&\ge \frac{-1}{2\lambda}\norm{\grad f(x)}_2^2\\
}$$

\noindent
because $y - x = \frac{-1}{\lambda}\grad f(x)$.

\end{proof}

\section{Convergence}\label{sec:app-conv}

This section will analyze the convergence rate of mini-batch SGD on $F(\wb)$.
In this, at every iteration $k$, $B_k$ examples are drawn uniformly at random
with repetition via $i_1^{(k)},\ldots, i_{B_k}^{(k)}$ from the possible example
indices $\{1,\ldots, n\}$. Let $S_k = \{i_1^{(k)},\ldots, i_{B_k}^{(k)}\}$.
The model is updated with
$\wb_{k+1} = \wb_k - \gamma_k g_k$
where
$$g_k = \dfrac{1}{B_k}\sum_{i \in S_k} \nabla f_i(\wb_k).$$

Note that $\EE[g_k] = \nabla F(\wb_k)$.

Now, let's prove Lemma~\ref{lem:variance_gk}. Here's the statement again:

\begin{lemmanonum*}
Let $c' = c/M_U^2$.
When the gradient estimate
$\bm{g}_k = \sfrac{1}{B_k}\sum_{i=1}^{B_k} \grad f_{i_s}(\wb_k)$
is created with batch size $B_k$ in Eq.~\ref{eq:batch-size-loss}
with $i_s$ chosen uniformly at random, then the expected variance
    $$\E{\|\nabla F(\wb_k)-\bm{g}_k\|_2^2\given \wb_k} \leq \dfrac{F(\wb_k)-F^\star}{c'}.$$
\end{lemmanonum*}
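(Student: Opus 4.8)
The plan is to bound the variance of the mini-batch gradient estimate by the standard ``variance shrinks like $1/B_k$'' argument and then substitute the batch-size choice from Equation~\ref{eq:batch-size-loss}. First I would note that, conditioned on $\wb_k$, the estimate $\bm{g}_k$ is an average of $B_k$ i.i.d.\ terms $\grad f_{i_s}(\wb_k)$, each with mean $\grad F(\wb_k)$; hence
\[
\E{\|\grad F(\wb_k) - \bm{g}_k\|_2^2 \given \wb_k} = \frac{1}{B_k}\,\E{\|\grad f_{i_s}(\wb_k) - \grad F(\wb_k)\|_2^2 \given \wb_k}.
\]
Next I would drop the centering: since the mean minimizes expected squared deviation, $\E{\|\grad f_{i_s}(\wb_k) - \grad F(\wb_k)\|_2^2} \le \E{\|\grad f_{i_s}(\wb_k)\|_2^2} = \frac{1}{n}\sum_{i=1}^n \|\grad f_i(\wb_k)\|_2^2 = M^2(\wb_k) \le M_U^2$, using the definition of $M^2(\wb_k)$ and $M_U^2$. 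This gives the clean intermediate bound $\E{\|\grad F(\wb_k) - \bm{g}_k\|_2^2 \given \wb_k} \le M_U^2 / B_k$.

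The final step is purely algebraic: by Equation~\ref{eq:batch-size-loss}, $B_k = \ceil{c/(F(\wb_k) - F^\star)} \ge c/(F(\wb_k) - F^\star)$, so $1/B_k \le (F(\wb_k) - F^\star)/c$, and therefore
\[
\E{\|\grad F(\wb_k) - \bm{g}_k\|_2^2 \given \wb_k} \le \frac{M_U^2}{B_k} \le \frac{(F(\wb_k) - F^\star) M_U^2}{c} = \frac{F(\wb_k) - F^\star}{c'}
\]
with $c' = c/M_U^2$ as in the restated lemma.

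There is not really a hard part here — the only point requiring a little care is the sampling model: the gradient estimate is formed by sampling $B_k$ indices \emph{uniformly at random with replacement} (as stated in Appendix~\ref{sec:app-conv}), so the $B_k$ summands are genuinely independent and the variance of the average is exactly $1/B_k$ times the per-sample variance; if sampling were without replacement one would pick up a finite-population correction factor, but that only helps (makes the bound smaller), so the stated inequality still holds. One should also be mild about the ceiling: $B_k$ is at least $1$ and at least $c/(F(\wb_k)-F^\star)$, which is all that is used. I would write the proof in the three displays above, in that order.
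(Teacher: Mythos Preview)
Your proposal is correct and follows essentially the same approach as the paper: both compute the variance of the mini-batch average, drop the $-\|\grad F(\wb_k)\|_2^2/B_k$ term (you phrase this as ``drop the centering''; the paper writes out the cross-term expansion explicitly), bound $M^2(\wb_k)$ by $M_U^2$, and then substitute $B_k \ge c/(F(\wb_k)-F^\star)$. Your presentation is arguably cleaner, and your remarks on sampling with replacement and on the ceiling are accurate and helpful.
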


\begin{proof}
\begin{align*}
\E{\norm{\grad F(\wb_k) - g_k}_2^2\given\wb_k}
&= \E{\norm{\grad F(\wb_k)}_2^2 + \norm{g_k}_2^2 - 2\inner{\grad F(\wb_k), g_k}\given\wb_k}\\
&= \E{\norm{\frac{1}{B_k}\sum_{i=1}^{B_k} \grad f_{i_k}(\wb_k)}_2^2\given\wb_k} - \norm{\grad F(\wb_k)}_2^2\\
&= \frac{\E{\norm{\grad f(\wb_k)}_2^2\given\wb_k}}{B_k} + \frac{B_k - 1}{B_k}\norm{\grad F(\wb_k)}_2^2- \norm{\grad F(\wb_k)}_2^2\\
&\le \frac{\E{M^2(\wb_k)\given\wb_k}}{B_k}\\
&\le \frac{\E{M^2(\wb_k)\given\wb_k} (F(\wb_k) - F^\star)}{c' M^2_U}\\
&\le \frac{\E{F_k - F^\star}}{c'}
\end{align*}
when the batch size $B_k = \ceil*{c (F(\wb_k) - F^\star)^{-1}}$ and with $c = c' M^2_U$.

\end{proof}

\subsection{Proof of Theorem~\ref{thm:conv-smooth-PL}}\label{proof-of-theorem-2}

\label{sec:proof-conv-smooth-PL}

\begin{proof}
From definition of $\beta$-smooth (Definition~\ref{def:beta}) and with the
mini-batch SGD iterations,

$$\Align{
F(\wb_{k+1}) \le F(\wb_k) - \gamma\inner{\grad F(\wb_k), \frac{1}{B}\sum_{i=1}^B
\grad f_{s_i}(\wb)} + \frac{\beta
\gamma^2}{2}\norm{\frac{1}{B}\sum_{i=1}^B \grad f_{s_i}(\wb_k)}_2^2\\
}$$

\noindent
Wrapping with conditional expectation and noticing that $\inner{\sum_{i=1}^B a_i, \sum_{i=1}^B a_i} = \sum_{i=1}^B \norm{a_i}^2 + \sum_{i=1}^B\sum_{j=1, j\not=i}^B\inner{a_i, a_j}$,

$$\Align{
\E{F(\wb_{k+1}) - F(\wb^\star)\given \wb_k}
&\le F(\wb_k) - F(\wb^\star) - \gamma\norm{\grad F(\wb_k)}_2^2 + \frac{\beta \gamma^2}{2}\E{\norm{\frac{1}{B}\sum_{i=1}^B \grad f_{s_i}(\wb_k)}_2^2\given \wb_k}\\
    &= F(\wb_k) - F(\wb^\star) - \gamma\norm{\grad F(\wb_k)}_2^2 + \frac{\beta \gamma^2}{2}\E{\inner{\frac{1}{B}\sum_{i=1}^B \grad f_{i_s} (\wb_k), \frac{1}{B}\sum_{i=1}^B \grad f_{i_s} (\wb_k)}\given \wb_k}\\
    &= F(\wb_k) - F(\wb^\star) - \gamma\norm{\grad F(\wb_k)}_2^2 + \frac{\beta \gamma^2}{2B^2}\E{\sum_{i=1}^B\norm{\grad f_{i_s}(\wb_k)}_2^2 + \sum_{i\not=j}^B \inner{\grad f_{i_s} (\wb_k), \grad f_{j_s} (\wb_k)}\given \wb_k}\\
&= F(\wb_k) - F^\star - \gamma\norm{\grad F(\wb_k)}_2^2 + \frac{\beta\gamma^2}{2}\parens{
    \frac{\E{\norm{\grad f(\wb_k)}_2^2}}{B} + \frac{B -1}{B}\norm{\grad F(\wb_k)}_2^2}\\
&= F(\wb_k) - F^\star - \gamma\norm{\grad F(\wb_k)}_2^2 + \frac{\beta\gamma^2}{2}\parens{\frac{M^2(\wb_k)}{B} + \frac{B -1}{B}\norm{\grad F(\wb_k)}_2^2}\\
&\le F(\wb_k) - F^\star + \norm{\grad F(\wb_k)}_2^2\parens{\frac{\beta\gamma^2}{2} - \gamma} + \frac{\beta\gamma^2}{2}\frac{F(\wb_k) - F(\wb^\star)}{c'}\\
}$$

when $c = c' M^2_U$ by Lemma~\ref{lem:variance_gk}.
Then choose $\gamma < \frac{2}{\beta}$
so $\frac{\beta\gamma^2}{2} - \gamma < 0$.
Then because $F$ is $\alpha$-PL,

$$\Align{
&\le F(\wb_k) - F^\star - \parens{\gamma - \frac{\beta\gamma^2}{2}}\cdot 2\alpha (F(\wb_k) -  F(\wb^\star)) + \frac{\beta\gamma^2}{2}\frac{F(\wb_k) - F(\wb^\star)}{c'}\\
&= \parens{1 - 2\alpha\gamma + 2\alpha\frac{\beta\gamma^2}{2} + \frac{\beta\gamma^2}{2c'}}(F(\wb_k) - F(\wb^\star))\\
&= \parens{1 - a\gamma + b\gamma^2}(F(\wb_k) - F(\wb^\star))
}$$

when $a = 2\alpha$ and $b = \beta\parens{\alpha + \frac{1}{2c'}}$. Choose
the step size
$\gamma = a/2b = \alpha / [\beta\parens{\alpha + \frac{1}{2c'}}] < 1/\beta$.
Then

$$\Align{
&= \parens{1 - \frac{a^2}{4b}}(F(\wb_k) - F(\wb^\star))\\
&= \parens{1 - \frac{\alpha^2}{\beta\parens{\alpha + \frac{1}{2c'}}}}(F(\wb_k) - F(\wb^\star))\\
}$$

This holds for any $k$.  Then, by law of iterated expectation:

$$\Align{
    \E{F(\wb_2) - F^\star\given \wb_0}
    &= \E{\E{F(\wb_{2}) - F^\star\given \wb_1}\given \wb_k}\\
    &\le \E{(1 - r)\E{F(\wb_1) - F^\star \given \wb_1} \given \wb_0}\\
    &= (1 - r)\E{F(\wb_1) - F^\star \given \wb_0}\\
    &\le (1 - r)\E{(1 - r) (F(\wb_0) - F^\star) \given \wb_0}\\
    &=(1 - r)^2(F(\wb_0) - F^\star)
}$$

when $r := \parens{1 - \frac{\alpha^2}{\beta\parens{\alpha + \frac{1}{2c'}}}}$.
Continuing this process to iteration $T$,

$$\Align{
\E{F(\wb_T) - F(\wb^\star)\given \wb_0} &\le \parens{1 - \frac{\alpha^2}{\beta\parens{\alpha + \frac{1}{2c'}}}}^T(F(\wb_0) - F(\wb^\star))
}$$

Noticing that $1 - x\le e^{-x}$ for all $x\ge 0$, $\E{F(\wb_T) - F(\wb^\star)} \le \epsilon$ when

\begin{equation}
T\ge \log\parens{\frac{F(\wb_0) - F(\wb^\star)}{\epsilon}}\parens{\frac{\beta\parens{\alpha + \frac{1}{2c'}}}{\alpha^2}}
\label{eq:its-pl}
\end{equation}

\end{proof}
\subsection{Proof of Theorem~\ref{thm:cnvx}}\label{app:cnvx}



	\begin{proof}
        Suppose we use a step-size of
        $\gamma = 1 / (\beta + 1/\eta)$ for $\eta > 0$.
        Then, we have the following relation, extracted from the proof of
        Theorem 6.3 of \cite{bubeck2015convex}.

        $$
        \EE[F(\wb_{k+1}) - F^\star] \leq
        \dfrac{(\beta + 1/\eta)}{2}
        \left(
            \EE\|\wb_k-\wb^*\| -\EE\|\wb_{k+1}-\wb^*\|
        \right) + \dfrac{\eta}{2}\EE\|\nabla F(\wb_k) - g_k\|^2.
        $$

	By Lemma \ref{lem:variance_gk}, and taking $\eta = c'$, we have
	\begin{align*}
        \EE[F(\wb_{k+1}) - F^\star] &\leq
        \dfrac{(\beta + 1/\eta)}{2} \left(
                \EE\|\wb_k-\wb^*\| -\EE\|\wb_{k+1}-\wb^*\|
            \right) + \dfrac{\eta}{2}\dfrac{\EE[F(\wb_{k}) - F^\star]}{c'}\\
	&= \dfrac{(\beta + 1/c')}{2}\left(\EE\|\wb_k-\wb^*\| -\EE\|\wb_{k+1}-\wb^*\|\right) + \dfrac{1}{2}\EE[F(\wb_{k}) - F^\star].\end{align*}
	Summing $k = 0$ to $k = T-1$ we have
	\begin{align*}
	\sum_{k=0}^{T-1}\EE[F(\wb_{k+1}) - F^\star] &\leq \dfrac{(\beta + 1/c')}{2}\left(\EE\|\wb_0-\wb^*\| -\EE\|\wb_{k+1}-\wb^*\|\right) + \frac{1}{2}\sum_{k=0}^{T-1}\EE[F(\wb_{k}) - F^\star]\\
	&\leq \dfrac{(\beta + 1/c')}{2}R^2 + \frac{1}{2}\sum_{k=0}^{T-1}\EE[F(\wb_{k}) - F^\star].
    \end{align*}

	Rearranging, we have
	\begin{align*}
        \sum_{k=0}^{T-1}\EE[F(\wb_{k+1}) - F^\star] &= (\beta + 1/c')R^2 + F(\wb_{0}) - F^\star - 2(F(\wb_{T}) - F^\star)\\
    &\leq (\beta + 1/c')R^2 + F(\wb_{0}) - F^\star
    \end{align*}

    This implies the desired result after applying the law of iterated expectation and convexity.
\end{proof}

\subsection{Proof of Theorem~\ref{thm:smooth}}\label{app:smooth}

\begin{proof}
By definition of $\beta$-smooth,

$$\Align{
    F(\wb_{k+1}) \le F(\wb_k) + \inner{\grad F(\wb_k), \wb_{k+1} - \wb_k} + \frac{\beta}{2}\norm{\wb_{k+1} - \wb_k}_2^2
}$$

Then substitution of $\wb_{k+1} = \wb_k -
    \frac{\gamma}{B_k}\sum_{i=1}^{B_k}\grad f_{i_s}(\wb_k)$, the following is
    obtained:

$$\Align{
    \gamma \inner{\grad F(\wb_k), \frac{1}{B_k} \sum_{i=1}^{B_k} \grad f_{i_s} (\wb_k)} &\le F_k - F_{k+1} + \frac{\beta \gamma^2}{2}\norm{\frac{1}{B_k} \sum_{i=1}^{B_k} \grad f_{i_s} (\wb_k)}_2^2\\
}$$

Wrapping in conditional expectation given $\wb_k$,

$$\Align{
    \gamma \norm{\grad F(\wb_k)}_2^2
    &\le \E{F_k - F_{k+1}\given \wb_k} +
         \frac{\beta \gamma^2}{2}\E{\norm{\frac{1}{B_k} \sum_{i=1}^{B_k} \grad f_{i_s} (\wb_k)}_2^2\given \wb_k}\\
    &\le \E{F_k - F_{k+1}\given \wb_k} +
         \frac{\beta \gamma^2}{2B_k^2}\E{\sum_{i=1}^{B_k}\sum_{j=1}^{B_k}\inner{\grad f_{i_s}, \grad f_{j_s}}\given \wb_k}\\
    &\le \E{F_k - F_{k+1}\given \wb_k} +
         \frac{\beta \gamma^2}{2B_k^2}\E{\sum_{i=1}^{B_k}\norm{f_{i_s}}^2_2 + \sum_{i=1}^{B_k}\sum_{j=1}^{B_k}\inner{\grad f_{i_s}, \grad f_{j_s}}\given \wb_k}\\
    &\le \E{F_k - F_{k+1}} + \frac{\beta \gamma^2}{2}
        \parens{\frac{M^2(\wb_k)}{B_k} + \frac{B_k - 1}{B_k}\norm{\grad F(\wb_k)}_2^2}
}$$

because the indices $i_s$ and $j_s$ are chosen independently and $\E{\grad f_{i_s}(\wb)} = F(\wb)$. Then, substituting the definition of $B_k$ in Eq.~\ref{eq:batch-size-grad},

$$\Align{
    &\le \E{F_k - F_{k+1}} + \frac{\beta \gamma^2}{2}
        \parens{\frac{\norm{\grad F_k}_2^2}{c} + \norm{\grad F(\wb_k)}_2^2}
}$$

when $c = c'M_L^2$. Then this inequality is obtained after rearranging:

$$\Align{
    \norm{\grad F(\wb_k)}_2^2 \parens{\gamma - \frac{\gamma^2\beta}{2}(c'^{-1} + 1)} \le \E{F_k - F_{k+1}}
}$$

Then with this result and iterated expectation

$$\Align{
    \min_{k=0, \ldots, T-1}\norm{\grad F(\wb_k)}_2^2
    &\le \frac{1}{T}\sum_{k=0}^{T-1}\norm{\grad F(\wb_k)}_2^2\\
    &\le \frac{F_0 - F^\star}{T}\parens{\gamma - \frac{\gamma^2\beta}{2}(c'^{-1} + 1)}^{-1}\\
    &\le \frac{F_0 - F^\star}{T}2\beta\parens{\frac{1}{c'} + 1}
}$$
when $\gamma = \beta^{-1} c / (c + M_L^2))$.

\end{proof}
\begin{cor}
$$\Align{
    \sum_{k=0}^{T-1}\norm{\grad F(\wb_k)}_2^2
    &\le 2\beta\parens{F_0 - F^\star}\parens{\frac{1}{c'} + 1}
}$$
    \label{cor:smooth-grad}
\end{cor}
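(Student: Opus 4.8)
The plan is to reuse the per-iteration inequality that already appears inside the proof of Theorem~\ref{thm:smooth}, but to \emph{sum} it over $k$ rather than extract the minimum. Concretely, that proof establishes --- after invoking $\beta$-smoothness, substituting the mini-batch update $\wb_{k+1} = \wb_k - \frac{\gamma}{B_k}\sum_{i=1}^{B_k}\grad f_{i_s}(\wb_k)$, taking conditional expectation given $\wb_k$, and plugging in the adaptive batch size from Eq.~\ref{eq:batch-size-grad} --- the bound
$$\norm{\grad F(\wb_k)}_2^2\parens{\gamma - \frac{\gamma^2\beta}{2}(c'^{-1}+1)} \le \E{F_k - F_{k+1}\given\wb_k}$$
for every $k$, where $c = c'M_L^2$. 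I would start from this inequality and nothing else.

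Next I would take full expectations, sum from $k=0$ to $T-1$, and telescope the right-hand side: by the law of iterated expectation $\sum_{k=0}^{T-1}\E{F_k - F_{k+1}} = F_0 - \E{F_T}$ since $\wb_0$ is deterministic, and this is at most $F_0 - F^\star$ because $F_T \ge F^\star$ pointwise. This yields
$$\parens{\gamma - \frac{\gamma^2\beta}{2}(c'^{-1}+1)}\sum_{k=0}^{T-1}\E{\norm{\grad F(\wb_k)}_2^2} \le F_0 - F^\star.$$

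Finally I would substitute the prescribed step size $\gamma = \beta^{-1}c/(c+M_L^2) = \beta^{-1}\cdot c'/(c'+1)$ (using $c = c'M_L^2$) and simplify the leading coefficient: a one-line computation gives $\gamma - \frac{\gamma^2\beta}{2}(c'^{-1}+1) = \frac{1}{2\beta}\cdot\frac{c'}{c'+1} > 0$, so dividing through produces $\sum_{k=0}^{T-1}\norm{\grad F(\wb_k)}_2^2 \le 2\beta(F_0 - F^\star)(1 + c'^{-1})$, exactly as claimed. I do not expect any real obstacle: all the analytic content already lives in Theorem~\ref{thm:smooth}, and the only point to handle carefully is that the telescoped right-hand side collapses to $F_0 - \E{F_T}$ rather than to something involving the randomness of the intermediate iterates, which is immediate from iterated expectation.
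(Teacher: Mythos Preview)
Your proposal is correct and mirrors the paper exactly: the corollary is stated immediately after the proof of Theorem~\ref{thm:smooth} with no separate argument because the bound $\frac{1}{T}\sum_{k=0}^{T-1}\norm{\grad F(\wb_k)}_2^2 \le \frac{F_0 - F^\star}{T}\cdot 2\beta(c'^{-1}+1)$ already appears as an intermediate line there, obtained by summing the same per-iteration inequality, telescoping, and substituting the prescribed step size. Multiplying that line by $T$ is the whole corollary, which is precisely what you do.
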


\section{Number of examples}
\label{app:num-examples}

The number of examples required to be processed is the sum of batch sizes:

$$\sum_{i=1}^T B_i$$

over $T$ iterations. This section will assume an oracle provides the batch size
$B_i$.

\subsection{Proof of Corollaries~\ref{cor:eg-pl}~and~\ref{cor:eg-cnvx}}
These proofs require another lemma that will be used in both proofs:
\begin{lemma}
    If a model is trained so the loss difference from optimal $F(\wb) - F^\star \in [\epsilon/2, \epsilon]$,
    then $4B_0 (F(\wb_0) - F^\star)T / \epsilon$ examples need to be processed
    when there are $T$ model updates the initial batch size is $B_0$.
    \label{lemma:num_examples}
\end{lemma}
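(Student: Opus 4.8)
The plan is to bound the total number of examples processed, namely $\sum_{k=0}^{T-1} B_k$, by showing that every batch size appearing in the run is at most roughly $2c/\epsilon$, and then summing this essentially uniform bound over the $T$ model updates. So the whole lemma reduces to controlling a single $B_k$ and multiplying by $T$.

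First I would use that the batch size in Eq.~\ref{eq:batch-size-loss} is a decreasing function of the current suboptimality $F(\wb_k)-F^\star$, so it is largest exactly when the loss is smallest. The hypothesis — that we count only updates made while the model has not yet been driven below loss $\epsilon/2$, i.e. while $F(\wb_k)-F^\star\ge\epsilon/2$, which is what "trained so that the loss difference lies in $[\epsilon/2,\epsilon]$" amounts to once one notes the loss is nonincreasing in expectation along these runs (immediate from the per-step bounds in the proofs of Theorems~\ref{thm:conv-smooth-PL} and~\ref{thm:smooth}) — then bounds every counted batch size by
$$B_k=\ceil*{\frac{c}{F(\wb_k)-F^\star}}\le \ceil*{\frac{2c}{\epsilon}}\le \frac{2c}{\epsilon}+1 .$$
In particular this applies at $k=0$, so $F(\wb_0)-F^\star\ge\epsilon/2$ as well.

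Next I would discharge the $+1$ and rewrite $c$ into the form in the statement. Since $B_0=\ceil*{c/(F(\wb_0)-F^\star)}\ge c/(F(\wb_0)-F^\star)$ we have $c\le B_0(F(\wb_0)-F^\star)$, and since $B_0\ge 1$ together with $F(\wb_0)-F^\star\ge\epsilon/2$ we have $B_0(F(\wb_0)-F^\star)\ge\epsilon/2$, i.e. $1\le 2B_0(F(\wb_0)-F^\star)/\epsilon$. Combining the two gives $B_k\le 4B_0(F(\wb_0)-F^\star)/\epsilon$ for every counted $k$, and summing over the $T$ updates yields
$$\sum_{k=0}^{T-1}B_k\le \frac{4B_0(F(\wb_0)-F^\star)\,T}{\epsilon},$$
which is exactly the claim.

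The step I expect to be the only real subtlety is the uniform lower bound $F(\wb_k)-F^\star\ge\epsilon/2$ on the intermediate iterates: the batch size $B_k$ is a function of the random iterate $\wb_k$, and the convergence theorems deliver monotone decrease only in expectation, so the cleanest route is to state and use the lemma (and hence the downstream Corollaries~\ref{cor:eg-pl}--\ref{cor:eg-smooth}) as a bound on $\E[\sum_k B_k]$, conditioning as in Lemma~\ref{lem:variance_gk}. Everything else — substituting Eq.~\ref{eq:batch-size-loss}, discharging the ceiling, and summing — is elementary.
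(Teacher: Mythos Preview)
Your proposal is correct and follows essentially the same approach as the paper: bound each $B_k$ uniformly via the lower bound $F(\wb_k)-F^\star\ge\epsilon/2$ and sum over $T$. The paper discharges the ceiling with $\ceil*{x}\le 2x$ (valid since $x\ge 1$ here) rather than your $\ceil*{x}\le x+1$ plus a separate bound on the $+1$, and it silently identifies $c$ with $B_0(F(\wb_0)-F^\star)$; your treatment is slightly more careful on both counts, and your flagged subtlety about expectation versus pointwise monotonicity is a genuine gap the paper glosses over.
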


\subsubsection{Proof of Corollary~\ref{cor:eg-pl}}
\begin{proof}
    This case requires
    $T\ge c_{\alpha, \beta}\log\parens{\delta_0/\epsilon}$
    iterations for some constant $c$ when $F$ is $\alpha$-PL and $\beta$-smooth
    by Equation~\ref{eq:its-pl} when $\delta_0 = F(\wb_0) - F^\star$.
    Applying Lemma~\ref{lemma:num_examples} gives that \textsc{AdaDamp} requires no more
    then the number of examples $$\sum_{k=0}^{T-1}B_k \le \frac{\log\parens{\delta_0/\epsilon}}{\epsilon}\cdot 4c_{\alpha, \beta}B_0\delta_0$$
\end{proof}
\subsubsection{Proof of Corollary~\ref{cor:eg-cnvx}}
\begin{proof}
    This case requires
    $T\ge r_\beta/\epsilon$
    iterations when $F$ is convex and $\beta$-smooth by
    Theorem~\ref{thm:conv-smooth-PL}.
    Applying Lemma~\ref{lemma:num_examples} gives that \textsc{AdaDamp}
    requires no more
    then the number of examples $$\sum_{k=0}^{T-1}B_k \le \frac{1}{\epsilon^2}\cdot 4r_\beta B_0\delta_0$$
    when $\delta_0 := F(\wb_0) - F^\star$.
\end{proof}

\subsection{Proof of Lemma~\ref{lemma:num_examples}}
\begin{proof}
$$\Align{
    \sum_{k=1}^T B_k &= \sum_{k=1}^T \ceil*{\frac{B_0 (F(\wb_0) - F^\star)}{F(\wb_k) - F^\star}}\\
    &\le 2 B_0 (F(\wb_0) - F^\star) \sum_{k=1}^T \frac{1}{F(\wb_k) - F^\star}\\
    &\le 4B_0 (F(\wb_0) - F^\star) T / \epsilon
}$$
\end{proof}

\subsection{Proof of Corollary~\ref{cor:eg-smooth}}
\begin{proof}

Following the proof of Lemma~\ref{lemma:num_examples},

$$\Align{
    \sum_{k=1}^T B_k &= \sum_{k=1}^T \ceil*{\frac{c}{\norm{\grad F(\wb_k)}_2^2}}\\
    &\le 2 c \sum_{k=1}^T \frac{1}{\norm{\grad F(\wb_k)}_2^2}\\
    &\le 4 c T / \epsilon\\
    &\le 4 c r / \epsilon^3\\
}$$

using Theorem~\ref{thm:smooth} when $\norm{\grad F(\wb_k)} \le \epsilon$ (and not when $\norm{\grad F(\wb_k)}_2^2 \le \epsilon$).
\end{proof}

\section{Experiments}\label{app:tune}

PyTorch~\cite{pytorch} is used to implement all optimization.

\subsection{Synthetic dataset}\label{app:tune-synth}

All optimizers use learning rate $\gamma = 2.5\cdot 10^{-3}$ unless explicitly
noted otherwise.
\begin{itemize}

    \item \textbf{SGD with adaptive batch sizes.} Batch size: $B_k =
        \ceil*{B_0(F(\xb_0) - F^\star)(F(\xb_k) - F^\star)^{-1}}$, $B_0 = 2$.

    \item \textbf{SGD with decaying step sizes}: Static batch size $B = 64$,
        decaying step size $\gamma_k = 10\gamma/k$ at iteration
        $k$~\cite{murata1998}.

    \item \textbf{AdaGrad} is used with a batch size of $B = 64$ and
        PyTorch 1.1's default hyperparameters,
        $\gamma = 0.01$ and 0 for all other hyperparameters.

    \item \textbf{Gradient descent}. No other hyperparameters are required past
        learning rate.

\end{itemize}

These hyperparameters were not tuned past ensuring the convergence of each
optimizer.

\subsection{Fashion MNIST}\label{app:tune-fashion}

Fashion MNIST is a dataset with 60,000 training examples and 10,000 testing
examples. Each example includes a $28\times 28$ image that falls in one of 10
classes (e.g., ``coat'' or ``bag'')~\cite{fmnist}. The standard pre-processing
in PyTorch's MNIST example is used.\footnote{The transform at
\href{https://github.com/pytorch/examples/blob/b9f3b2ebb9464959bdbf0c3ac77124a704954828/mnist/main.py\#L105}{\texttt{http://github.com/pytorch/examples/\ldots/mnist/main.py\#L105}}
is used; the resulting pixels value have a mean of 0.504 and a standard
deviation of 1.14, not zero mean and unit variance as is typical for
preprocessing. The model used has about 110 thousand parameters and includes
biases in all layers, likely resolving any issues.}

The CNN used has about 111,000 parameters that specify 3 convolutional layers
with max-pooling and 2 fully-connected layers, with ReLU activations after
every layer.

The hyperparameter optimization process followed the data flow below for each optimizer:

\begin{itemize}

    \item Randomly sample hyperparameters, and train the model for 200 epochs
        on 80\% of the training set (using the remaining 20\% for validation).

        \item Refine hyperparameters based on the hyperparameters that had
            validation loss within 0.005 of the minimum, and had fewer model
            updates than the mean number of model updates.

        \item Repeat steps 1 and 2 until satisfied with validation performance.

        \item Manually choose one set of hyperparameters for each optimizer,
            and train for 200 epochs with the entire training set, and report
            performance on the test set.

\end{itemize}

Step (4) has only been run once for \RadaDamp.  For GeoDamp, we sampled at
least 268 hyperparameters, and for Adagrad we sampled at least 179
hyperparameters. We spent a while on step (3) for \RadaDamp\footnote{Primarily
to tune the regularization balance between loss and gradient norm, $\lambda$.
We didn't have much success with large $\lambda$.} Both GeoDamp and Adagrad
required fewer iterations of step (3).

Hyperparameter sampling space, and tuned values are below. After some initial
sampling, the learning rate is fixed at to be 0.005 and initial/maximum batch
sizes to be 256/1024 respectively for all optimizers. We tuned the value of
weight decay more for \RadaDamp, and set it to be 0.003 for all optimizers.

With those fixed hyperparameters, in our last run of hyperparameter
optimization we sampled from these hyperparameters:

\begin{itemize}
    \item Adagrad:
        \begin{itemize}
            \item Batch size: [16, 32, 64, 128, 256] \textbf{(tuned value: 256)}
        \end{itemize}
    \item GeoDamp:
        \begin{itemize}
            \item  Damping delay (epochs): (2, 5, 10, 20, 30, 60] \textbf{(tuned value: 10)}
            \item  Damping factor: log-uniform between 1 to 10 \textbf{(tuned
                value: 1.219231)}
        \end{itemize}
    \item \RadaDamp:
        \begin{itemize}
            \item ``Dwell'': [1, 10, 20, 30, 50, 100] \textbf{(tuned value: 1)}
            \item Memory $\rho$: [0.95, 0.99, 0.995, 0.999] \textbf{(tuned value: 0.999)}
        \end{itemize}
\end{itemize}

``Dwell'' is the frequency at which to update the batch size; if dwell$=7$,
then the batch size will be updated every $7$ model updates. Because we found
the best value of dwell to be 1, it is not included in the description of
Algorithm~\ref{alg:radadamp}.

GeoDamp and \RadaDamp~change the batch size/learning rate for SGD with Nesterov
momentum (and a momentum value 0.9). GeoDamp-LR aka SGD and \RadaDamp-LR change
the learning rate by the same amount the batch size would have changed; if
\RadaDamp~increases the batch size by a factor of $f$, \RadaDamp-LR will decay
the learning rate by a factor of $f$ instead. When the maximum batch size is
reached for \RadaDamp~and GeoDamp, the learning rate is decayed instead of the
batch size increasing by the same scheme.

If the damping factor is $d$ and the damping delay is $e$ epochs, the batch size increases by a factor of $d$ or the step size decays by a factor of $d$ every $e$ epochs.

\end{document}